\def\BibTeX{{\rm B\kern-.05em{\sc i\kern-.025em b}\kern-.08em
    T\kern-.1667em\lower.7ex\hbox{E}\kern-.125emX}}
\setlist[itemize]{noitemsep}
\setlist[enumerate]{noitemsep}
\newtheorem*{rep@theorem}{\rep@title}
\newcommand{\newreptheorem}[2]{%
\newenvironment{rep#1}[1]{%
 \def\rep@title{#2 \ref{##1}}%
 \begin{rep@theorem}}%
 {\end{rep@theorem}}}
\theoremstyle{plain}
\newtheorem{theorem}{Theorem}[section]
\newtheorem{proposition}[theorem]{Proposition}
\newtheorem{lemma}[theorem]{Lemma}
\theoremstyle{definition}
\newtheorem{definition}[theorem]{Definition}
\theoremstyle{remark}
\newtheorem{remark}[theorem]{Remark}
\setlist[itemize]{noitemsep}
\setlist[enumerate]{noitemsep}
\newcommand\bolden[1]{{\boldmath\bfseries#1}}
\newcommand{\myuline}[1]{%
  \uline{\phantom{#1}}%
  \llap{\contour{white}{#1}}%
}
\newcommand{\smallsection}[1]{{\noindent {\bolden{\myuline{#1}}}}}
\definecolor{lucky}{RGB}{120, 120, 120}
\newcommand\blue[1]{\textcolor{blue}{#1}}
\theoremstyle{plain}
\newtheorem{property}[theorem]{Property}
\def\mydefbb#1{\expandafter\def\csname bb#1\endcsname{\ensuremath{\mathbb{#1}}}}
\def\mydefallbb#1{\ifx#1\mydefallbb\else\mydefbb#1\expandafter\mydefallbb\fi}
\def\mydefcal#1{\expandafter\def\csname cal#1\endcsname{\ensuremath{\mathcal{#1}}}}
\def\mydefallcal#1{\ifx#1\mydefallcal\else\mydefcal#1\expandafter\mydefallcal\fi}
\newcommand{\abs}[1]{\left|#1\right|}
\newcommand{\set}[1]{\{#1\}}
\newcommand{\overlap}{\operatorname{Ov}}
\newcommand{\feigm}{f^{\text{EI}}_{p}}
\newcommand{\flb}{f^{\text{LB}}_{p; g, R}}
\newcommand{\fiid}{f^{\text{PB}}_{p; g, R}}
\newcommand{\fxx}{f^{\text{X}}_{p; g, R}}
\newcommand{\pfull}{p \colon \binom{V}{2} \to [0, 1]}
\newcommand{\gfull}{g \colon V \to [0, 1]}
\setlist[itemize]{leftmargin=*, itemsep=0pt}
\setlist[enumerate]{leftmargin=*, itemsep=0pt}
\newcommand{\GT}{\textsc{GroundT}\xspace}
\newcommand{\EI}{\textsc{EdgeInd}\xspace}
\newcommand{\LB}{\textsc{LoclBdg}\xspace}
\newcommand{\PB}{\textsc{ParaBdg}\xspace}
\newcommand{\hamster}{\textit{Hams}\xspace}
\newcommand{\facebook}{\textit{Fcbk}\xspace}
\newcommand{\polblog}{\textit{Polb}\xspace}
\newcommand{\spam}{\textit{Spam}\xspace}
\newcommand{\biopg}{\textit{Cepg}\xspace}
\newcommand{\bioht}{\textit{Scht}\xspace}
\let\citep\cite
\let\citet\cite
\begin{document}

\title{Edge Probability Graph Models Beyond Edge Independency: Concepts, Analyses, and Algorithms
}

\author{
Fanchen Bu\textsuperscript{1},
Ruochen Yang\textsuperscript{2},
Paul Bogdan\textsuperscript{2},
and Kijung Shin\textsuperscript{1}\\
\textsuperscript{1}KAIST, Republic of Korea and \textsuperscript{2}University of Southern California, United States\\
\textsuperscript{1}\{boqvezen97, kijungs\}@kaist.ac.kr and
\textsuperscript{2}\{ruocheny, pbogdan\}@usc.edu
}

\maketitle

\nocite{appendix}

\begin{abstract}
Desirable random graph models (RGMs) should 
\textit{(i)} reproduce \textit{common patterns} in real-world graphs (e.g., power-law degrees, small diameters, and high clustering),
\textit{(ii)} generate \textit{variable} (i.e., not overly similar) graphs,
and \textit{(iii)} remain \textit{tractable} to compute and control graph statistics.
A common class of RGMs (e.g., Erd\H{o}s-R\'{e}nyi and stochastic Kronecker) outputs edge probabilities, so we need to \textit{realize} (i.e., sample from) the output edge probabilities to generate graphs.
Typically, the existence of each edge is assumed to be determined independently, for simplicity and tractability.
However, with edge independency, RGMs provably cannot produce high subgraph densities and high output variability simultaneously.
In this work, we explore RGMs beyond edge independence that can better reproduce common patterns
while maintaining high tractability and variability.
Theoretically, we propose an edge-dependent realization (i.e., sampling) framework called \textit{binding} that provably preserves output variability, and derive \textit{closed-form} tractability results on subgraph (e.g., triangle) densities.
Practically, we propose algorithms for graph generation with binding and parameter fitting of binding.
Our empirical results demonstrate that RGMs with binding exhibit high tractability and well reproduce common patterns, significantly improving upon edge-independent RGMs.

\end{abstract}

\begin{IEEEkeywords}
Random graph model, Edge dependency, Edge probability, Subgraph density
\end{IEEEkeywords}

\section{Introduction}\label{sec:intro}

Random graph models (RGMs) help us understand, analyze, and predict real-world systems~\citep{drobyshevskiy2019random},
{with various practical applications, e.g., graph algorithm testing~\citep{murphy2010introducing}, statistical testing~\citep{ghoshdastidar2017two}, and graph anonymization~\citep{backstrom2007wherefore}.}
Desirable RGMs should generate graphs with 
\textit{common patterns} in real-world graphs,
such as high clustering,\footnote{\textit{High clustering} means high subgraph densities as in, e.g., \citet{newman2003properties} and \citet{pfeiffer2012fast}.} power-law degrees, and small diameters~\citep{chakrabarti2006graph}.
At the same time, the generated graphs should be \textit{variable}, i.e., not highly-similar or even near-identical, and the RGMs should be \textit{tractable}, i.e., we can compute and control graph statistics of the generated random graphs.

Many RGMs output individual edge probabilities and generate graphs accordingly, e.g., 
the Erd\H{o}s-R\'{e}nyi model~\citep{erdos1959randomgraph1}, 
the Chung-Lu model~\citep{chung2002connected}, 
the stochastic block model~\citep{holland1983stochastic}, and
the stochastic Kronecker model~\citep{leskovec2010kronecker}.
To generate graphs from edge probabilities, we need \textit{realizations} (i.e., sampling), where edge independency (i.e., the existences of edges are determined independently) is widely assumed for simplicity and tractability.
Although edge-independent RGMs have high tractability and may reproduce some patterns (e.g., power-law degrees and small diameters),
they empirically fail to preserve some other patterns, especially high clustering~\citep{moreno2018tied,seshadhri2013depth}, which has also been theoretically validated.
Specifically, edge-independent RGMs provably cannot generate graphs with high triangle density and high output variability at the same time~\citep{nips21powerOfEIGMs}.

Naturally, we ask:
Can we apply realization \textit{without assuming edge independency} so that we can improve upon such RGMs to generate graphs \textit{reproducing common patterns} and \textit{having high variability}, while still \textit{ensuring high tractability}?
To address this question, we propose and explore the concept of \textit{edge probability graph models} (EPGMs), i.e., RGMs that are still based on edge probabilities but do not assume edge independency, from theoretical and practical perspectives.
Our key message is a positive answer to the question.
Specifically, our novel contributions are fourfold:
\begin{enumerate}[noitemsep,topsep=0mm]
    \item \textbf{Concepts (\cref{sec:prob_statement}):} We formally propose and define the \textit{concept} of EPGMs,
    and theoretically show some basic \textit{properties} of EPGMs, e.g., even with edge dependency introduced, {the \textit{output variability remains as high}} as the corresponding edge-independent model.
    \item \textbf{Analyses (\cref{sec:binding}):} We propose \textit{binding}, a \textit{(i) pattern-reproducing}, \textit{(ii) tractable}, and \textit{(iii) flexible} realization scheme, to construct EPGMs with different levels of edge dependency, and derive tractability results on the \textit{closed-form} subgraph (e.g., triangle) densities.
    \item \textbf{Algorithms (\cref{sec:binding}):} We propose practical algorithms for
    \textit{(i) graph generation} with binding, and for 
    \textit{(ii) efficient parameter fitting} to control the graph statistics generated by EPGMs with binding.
    \item \textbf{Experiments (\cref{sec:exp}):}     
    Via experiments on real-world graphs, we show the power of the proposed binding and parameter-fitting algorithms to reproduce common graph patterns and their efficiency, validating the correctness of our theoretical analyses and practical algorithms.
\end{enumerate}

\smallsection{Reproducibility.}
The appendix, code, and data are publicly available online in~\citep{appendix} (\url{https://github.com/bokveizen/edge-probability-graph-model}).

\section{Preliminaries}\label{sec:prelim_background}

\smallsection{Graphs.}
A \textit{graph} $G = (V, E)$ is defined by a \textit{node set} $V = V(G)$ and an \textit{edge set} {$E = E(G) \subseteq \binom{V}{2} \coloneqq \set{V' \subseteq V \colon \abs{V'} = 2}$}.\footnote{In this work, we consider undirected unweighted graphs without self-loops following common settings for random graph models. {See Appendix~C-A~\cite{appendix} for discussions on more general graphs, e.g., directed or weighted graphs.}}
For a node $v \in V$, the set of its \textit{neighbors} is $N(v; G) = \set{u \in V \colon (u, v) \in E(G)}$, and its \textit{degree} $d(v; G) = \abs{N(v; G)}$ is the number of its neighbors.
Given $V' \subseteq V$, the \textit{induced subgraph} of $G$ on $V'$ is $G[V'] = (V', E \cap \binom{V'}{2})$.

\smallsection{Triangles and clustering.}
Given $G = (V, E)$, the number of wedges (i.e., open triangles) is $n_{w}(G) = \sum_{v \in V} \binom{d(v)}{2}$. The \textit{global clustering coefficient} (GCC) of $G$ is defined as $\text{GCC}(G) = \frac{\triangle(G)}{n_w(G)}$,
where $\triangle(G)$ is the number of triangles in $G$.
The \textit{average local clustering coefficient} (ALCC) of $G$ is defined as $\text{ALCC}(G) = \sum_{v \colon d(v) \geq 2} \frac{\triangle(v; G)}{\binom{d(v)}{2}}$, where $\triangle(v; G)$ is the number of triangles involving $v$ in $G$.

\smallsection{Random graph models (RGMs).}
Fix a node set $V = [n] = \set{1, 2, \ldots, n}$ with $n \in \bbN$.
Let $\calG(V) = \set{G = (V, E) \colon E \subseteq \binom{V}{2}}$ denote the set of all $2^{\binom{n}{2}}$ possible {node-labeled} (i.e., each node is assigned a distinct identifier $i \in [n]$) graphs on $V$.
    A \textit{random graph model} (RGM) is defined as a probability distribution $f: \calG(V) \to [0, 1]$ with $\sum_{G \in \calG(V)} f(G) = 1$.
    For each graph $G \in \calG(V)$, $f(G)$ is the probability of $G$ being generated by the RGM $f$.
    For each node pair $(u, v)$ with $u, v \in V$, the (marginal) \textit{edge probability} of $(u, v)$ under the RGM $f$ is $\Pr_{f}[(u, v)] \coloneqq \sum_{G \in \calG(V)} f(G) \mathbf{1}[(u, v) \in E(G)]$, where $\mathbf{1}[\cdot]$ is the indicator function.

\smallsection{Edge independent graph models (EIGMs).}
Given edge probabilities, edge independency is widely assumed in many existing RGMs, resulting in the concept of edge independent graph models (EIGMs~\citep{nips21powerOfEIGMs}; also known as \textit{inhomogeneous Erd\H{o}s-R\'{e}nyi graphs}~\citep{klopp2017oracle}).
The EIGM $\feigm$ w.r.t. edge probabilities $\pfull$ is defined by
$\feigm(G) = \prod_{(u, v) \in E(G)} p(u, v) \prod_{(u', v') \notin E(G)} (1 - p(u', v')), \forall G \in \calG(V)$.

\section{Related work and background}\label{sec:rel_wk}

\subsection{Limitations of EIGMs}\label{sec:rel_wk:limit_eigms}
Chanpuriya et al. \cite{nips21powerOfEIGMs} defined the concept of \textit{overlap} to measure the variability of RGMs, where a high overlap value implies low variability.
The overlap of an RGM $f$ is the expected proportion of edges in two generated graphs: $\overlap(f) = \frac{\bbE_{f} \abs{E(G') \cap E(G'')}}{\bbE_{f} \abs{E(G)}}$, where $G$, $G'$, and $G''$ are three random graphs independently generated by $f$.

High variability (i.e., low overlap) is important for RGMs~\citep{de2018molgan}, {as generating overly similar graphs undermines the effectiveness of RGMs in common applications, e.g., graph algorithm testing and graph anonymization (see \cref{sec:intro}).}
Chanpuriya et al. \citet{nips21powerOfEIGMs} showed that EIGMs cannot generate graphs with high clustering, which is a common property in real-world graphs~\citep{chakrabarti2006graph},
and low overlap at the same time.

Chanpuriya et al. \citet{arxiv23edgeDpendency} recently extended their theoretical results to edge-dependent RGMs.
However, they did not
provide practical graph generation algorithms or detailed tractability results,\footnote{Their graph generation relies on time-consuming maximal clique enumeration. See Appendix~C-B~\cite{appendix} for more discussions.}
while tractability results and practical graph generation are part of our focus in this work.

{Some methods shift edge probabilities from existing EIGMs by accept-reject~\citep{mussmann2015incorporating} or mixing different 
existing EIGMs~\cite{kolda2014scalable}.
Such methods are essentially still EIGMs, and inevitably have high overlap (i.e., low variability).
See Appendix~D-F~\cite{appendix} for more discussion and evaluation on such methods.}

\subsection{Edge-dependent RGMs}\label{sec:rel_wk:dependency}
Edge dependency exists in various RGMs, e.g.,
preferential attachment models~\citep{barabasi1999emergence}, 
small-world graphs~\citep{watts1998collective},
random geometric graphs~\citep{penrose2003random},
and exponential random graphs~\citep{lusher2013exponential}.
Recent work on exchangeable network models, which initially assume symmetry among nodes, introduce \textit{node asymmetry} to improve expressiveness~\citep{crane2018edge,wu2025tractably}.
In a similar spirit but from a different perspective, we introduce \textit{edge dependence} upon EIGMs to enhance the power of RGMs.

We differ from existing edge-dependent RGMs as follows:
\begin{itemize}[noitemsep,topsep=0pt]
    \item We provide a novel perspective on constructing edge-dependent RGMs by decomposing RGMs into (1) the marginal edge probabilities and (2) the realization of the probabilities (see \cref{def:EPGMs}).
    \item This new perspective allows us to improve upon existing EIGMs, inheriting their strengths (e.g., variability and tractability; see \cref{thm:constant_degrees_overlap} and \cref{thm:local_binding_motif_probs})
    and alleviating their weaknesses (e.g., inability to generate high clustering; see \cref{sec:app:p1_clustering}).
    \item We derive \textit{closed-form} tractability results on subgraph densities     
    (see \cref{thm:local_binding_motif_probs}), 
    while only \textit{asymptotic} results, as the number of nodes goes to infinity, are usually available for existing models~\citep{ostroumova2017general,gu2013clustering,bhat2016densification}.
    \item Our model empirically shows better capability in generating graphs with both high clustering and high variability than existing models (see \cref{sec:compare_other_models}).
\end{itemize}

\smallsection{Target.} In this work, we aim to propose novel edge-dependent RGMs with the following desirable properties:
\begin{itemize}[noitemsep,topsep=0pt]
    \item \textit{Reproducing common patterns} widely observed in real-world graphs, e.g., high clustering, power-law degrees, and small diameters~\citep{chakrabarti2006graph}.
    \item \textit{Showing high output variability}, generating variable graphs with low overlap (see \cref{sec:rel_wk:limit_eigms}).
    \item \textit{Having high tractability}, with the feasibility of obtaining closed-form results of graph statistics in generated graphs.
\end{itemize}

\section{Edge probability graph models (EPGMs): Concept and basic properties}\label{sec:prob_statement}
In this section, we introduce the concept of edge probability graph models (EPGMs) and show some basic properties of EPGMs.
EIGMs generate graphs assuming edge independency.
In contrast, we explore a broader class of edge probability graph models (EPGMs) going beyond edge independency.
\begin{definition}[EPGMs]\label{def:EPGMs}
    Given edge probabilities $\pfull$, the set $\calF(p)$ of \textit{edge probability graph models} (EPGMs) w.r.t. $p$ consists of all the RGMs with marginal edge probabilities $p$, i.e.,
    $\calF(p) \coloneqq \set{f \colon \Pr_f[(u, v)] = p(u, v), \forall u, v \in V}$.
\end{definition}

By EPGMs, we decompose each RGM into two factors:
\textbf{(F1)} the marginal probability of each edge and
\textbf{(F2)} how the edge probabilities are realized (i.e., {sampled}).
EIGMs have overlooked (F2), simply using realization assuming edge independency.
Our decomposition via EPGMs introduces a novel perspective on imposing edge dependency.

Below, we show some basic properties of EPGMs and discuss their implication and significance.

\begin{property}[EPGMs are general]\label{thm:EPGMs_general}
    {Any RGM can be represented as an EPGM (w.r.t its marginal edge probabilities).}
\end{property}
\begin{proof}[Proof sketch]
Any RGM is a multivariate distribution that can be represented by the marginal probabilities of each variable and the dependency among them.
\end{proof}
Note that, for all theoretical results in the paper, we provide \textbf{formal statements and/or full proofs} in Appendix~A~\cite{appendix}.

\smallsection{Implication.}
\cref{thm:EPGMs_general} shows the \textit{generality} of EPGMs, as they encompass all possible RGMs. 
However, it also implies that the space of EPGMs is too vast to be fully explored,
{which motivates us to find \textit{good subsets} of EPGMs.}

Finally, since each individual edge probability is preserved as in $p$, several other quantities are consequently preserved.
\begin{property}[EPGMs have constant expected degrees and overlap]\label{thm:constant_degrees_overlap}
    For any $\pfull$, the expected node degrees and overlap (see \cref{sec:rel_wk:limit_eigms}) of all the EPGMs w.r.t. $p$ are constant (i.e., depend only on $p$), ensuring EPGMs preserve degree distributions in expectation and have as low overlap (i.e., high output variability) as the corresponding EIGM.
\end{property}
\begin{proof}[Proof sketch]
It follows from the linearity of expectation.
\end{proof}

\smallsection{Implications.}
Many EIGMs can generate graphs with desirable degree distributions, and \cref{thm:constant_degrees_overlap} ensures that EPGMs inherit such strengths (see \cref{fig:main_fig} for empirical evidence).
Moreover, it ensures that EPGMs have as high output variability as the corresponding EIGM, which is an important and desirable property of RGMs (see \cref{sec:rel_wk}).

\smallsection{Research questions.}
Inspired by the basic properties of EPGMs, below, we explore EPGMs from both theoretical and practical perspectives, aiming to answer two questions:
\begin{itemize}[itemsep=0pt]
    \item \textbf{(RQ1; Theory)} {What good subsets of EPGMs are pattern-reproducing, flexible, and tractable?}
    \item \textbf{(RQ2; Practice)} How to generate graphs using such EPGMs and fit the parameters of EPGMs?
\end{itemize}

\normalem
\begin{algorithm}[t!]
    \small
    \caption{General Binding}\label{alg:binding_general}
    \DontPrintSemicolon
    \SetKwInOut{Input}{Input}
    \SetKwInOut{Output}{Output}
    \SetKw{Continue}{continue}
    \SetKwComment{Comment}{\blue{$\triangleright$ }}{}
    \SetKwFunction{proc}{binding}
    \SetKwProg{myproc}{Procedure}{}{}
    
    \Input{        
        (1) edge probabilities $p \colon \binom{V}{2} \to [0, 1]$; (2) pair partition $\calP$ s.t. $\bigcup_{P \in \calP} P = \binom{V}{2}$ with $P \cap P' = \emptyset, \forall P \neq P' \in \calP$
    }
    \Output{generated graph $G$}

    $E \gets \bigcup_{P \in \calP} \text{\proc{$p$, $P$}}$ \\  \label{alg:binding_general:update_E} 
    \Return{$G = (V, E)$} \label{alg:binding_general:return_G}
    
    \myproc{\proc{$\hat{p}$, $\hat{P}$}}{\label{alg:binding_general:local_binding}
        sample a random variable $s \sim \calU(0, 1)$ \\ \label{alg:binding_general:local_binding:sample_s}
        $\hat{E} \gets \set{(u, v): \hat{p}(u, v) \geq s}$ 
        \hfill \Comment{\blue{Decided by same $s$}}\label{alg:binding_general:local_binding:add_edge}
        \Return{$\hat{E}$} \\
        \label{alg:binding_general:local_binding:return_E} 
    }
\end{algorithm}
\ULforem

\section{Binding: Analyses and algorithms}\label{sec:binding}

The output variability of EPGMs is guaranteed by \cref{thm:constant_degrees_overlap}.
Therefore, we now aim to find good subsets of EPGMs for given edge probabilities (which can be obtained by EIGMs) that 
\textit{(i)} reproduce \textit{common patterns} 
(especially high clustering, the bottleneck of EIGMs) 
and
\textit{(ii)} are \textit{tractable} (i.e., with controllable graph statistics).

\subsection{General binding: General mathematical framework}\label{subsec:binding:binding}
Desirable RGMs should generate graphs with common patterns, e.g., high clustering, power-law degrees, and small diameters.
Some EIGMs (e.g., stochastic Kronecker) already do well in the latter two aspects, so we focus on the bottleneck ``high clustering'' and seek EPGMs that achieve it.\footnote{{Empirically, binding maintains or even improves the generation quality w.r.t. several different graph metrics, including but not limited to degrees and diameters (see \cref{subsec:EPGMs_do_not_harm_others}).}}
To this end, we first study and propose \textit{binding}, {a general mathematical framework} that introduces positive dependency among edges to determine multiple edge existences together.
Binding can be seen as a way of probability coupling~\cite{thorisson1995coupling}, 
with given marginal distributions (i.e., edge probabilities).
The process of binding is described in \cref{alg:binding_general},
where edge dependence is imposed in each group of pairs, in the sense that, if a node pair is sampled as an edge, all the pairs with higher probabilities must be sampled (Line~\ref{alg:binding_general:local_binding:add_edge}).
{{Note that, {\cref{alg:binding_general} describes a general framework, while} our practical algorithms (\cref{alg:local_binding,alg:iid_binding}) do not need to choose an explicit partition $\calP$ beforehand.}}
See Definition~A.4 in Appendix~A~\cite{appendix} for a mathematical definition of binding.

There are two basic properties of binding:
\textit{(i)} binding is \textit{correct}, i.e., generates EPGMs, and
\textit{(ii)} binding produces higher subgraph densities (i.e., higher clustering) than EIGMs, which aligns with our motivation to improve clustering.

\begin{proposition}\label{thm:binding_preserve_probs}  \cref{alg:binding_general} with input $p$ (and any $\calP$) correctly produces an EPGM w.r.t. $p$.
\end{proposition}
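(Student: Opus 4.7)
The plan is to verify the defining condition of \cref{def:EPGMs}, namely that for every pair $(u,v) \in \binom{V}{2}$, the marginal probability $\Pr_{f}[(u,v)]$ under the distribution $f$ induced by \cref{alg:binding_general} equals $p(u,v)$. Since $\calP$ is a partition of $\binom{V}{2}$, every pair belongs to exactly one part, so the argument reduces to a single-part calculation.

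First I would fix an arbitrary $(u,v) \in \binom{V}{2}$ and let $P \in \calP$ be the unique part with $(u,v) \in P$. The event $(u,v) \in E(G)$ depends only on the outcome of the call \texttt{binding}$(p, P)$ in Line~\ref{alg:binding_general:update_E}, because the parts of $\calP$ are disjoint and the union taken in Line~\ref{alg:binding_general:update_E} simply assembles their contributions. In that call, a single $s \sim \calU(0,1)$ is drawn (Line~\ref{alg:binding_general:local_binding:sample_s}), and $(u,v)$ is added iff $p(u,v) \geq s$ (Line~\ref{alg:binding_general:local_binding:add_edge}). By the CDF of the uniform distribution on $[0,1]$, $\Pr[s \leq p(u,v)] = p(u,v)$, so $\Pr_{f}[(u,v)] = p(u,v)$.

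Since $(u,v)$ was arbitrary, every marginal edge probability of $f$ matches $p$, and hence $f \in \calF(p)$ by \cref{def:EPGMs}, completing the proof. The argument is independent of the particular choice of $\calP$, which is exactly the ``any $\calP$'' claim in the statement; the dependency structure introduced \emph{within} each part only affects joint probabilities, not marginals.

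There is no real obstacle here; the only thing to be careful about is the bookkeeping that $\calP$ partitions $\binom{V}{2}$, so that each pair is sampled exactly once and the overall event decomposition is legitimate. The uniform-CDF step is a one-liner and needs no asymptotics or additional machinery, which is why this proposition can be stated cleanly as a correctness check before the more substantive tractability results (e.g., \cref{thm:local_binding_motif_probs}) that exploit the dependency structure inside each part.
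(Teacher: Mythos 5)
Your proof is correct and follows essentially the same route as the paper's: the edge $(u,v)$ is included iff $p(u,v) \geq s$ for a single $s \sim \calU(0,1)$, which occurs with probability exactly $p(u,v)$, so all marginals are preserved regardless of $\calP$. The extra bookkeeping you supply about each pair lying in exactly one part of the partition is a reasonable elaboration of the same one-line argument.
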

\begin{proof}[Proof sketch]
    Each edge $(u, v)$ exists when $p(u, v) \geq s$, which happens with probability $p(u, v)$ since $s \sim \calU(0,1)$.
\end{proof}

\begin{proposition}\label{thm:binding_higher_density}
    Binding produces higher or equal subgraph densities, compared to the corresponding EIGMs.
\end{proposition}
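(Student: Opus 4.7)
My approach is a direct coupling argument: I will show that for any fixed set $S \subseteq \binom{V}{2}$ of node pairs, the joint probability that all pairs in $S$ appear as edges is at least as large under binding as under the corresponding EIGM, and then conclude the claim on subgraph densities by linearity of expectation.

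\textbf{Step 1 (reduction to joint edge probabilities).} For a target subgraph $H$, its expected count in a random graph $G$ can be written as a sum, over all embeddings $\iota$ of $H$ into $V$, of $\Pr_f[\iota(E(H)) \subseteq E(G)]$. Hence, to prove the proposition, it suffices to show that for every finite $S \subseteq \binom{V}{2}$,
\begin{equation*}
    \Pr_{\text{bind}}\!\left[S \subseteq E(G)\right] \;\geq\; \Pr_{\text{EIGM}}\!\left[S \subseteq E(G)\right] \;=\; \prod_{e \in S} p(e).
\end{equation*}

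\textbf{Step 2 (joint probability under binding).} Under \cref{alg:binding_general} with partition $\calP$, the blocks are sampled with independent uniform seeds $s_P \sim \calU(0,1)$, and within a block $P$ a pair $e \in P$ is declared an edge iff $p(e) \geq s_P$. Splitting $S$ as $S = \bigsqcup_{P \in \calP} (S \cap P)$, independence across blocks gives
\begin{equation*}
    \Pr_{\text{bind}}\!\left[S \subseteq E(G)\right] \;=\; \prod_{P \in \calP} \Pr\!\left[\,p(e) \geq s_P \text{ for all } e \in S \cap P\,\right] \;=\; \prod_{P \in \calP} \min_{e \in S \cap P} p(e),
\end{equation*}
with the convention that an empty minimum equals $1$.

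\textbf{Step 3 (min dominates product) and conclusion.} Since every $p(e) \in [0,1]$, we have $\min_{e \in S \cap P} p(e) \geq \prod_{e \in S \cap P} p(e)$ for each block $P$. Multiplying these block-wise inequalities yields
\begin{equation*}
    \Pr_{\text{bind}}\!\left[S \subseteq E(G)\right] \;\geq\; \prod_{P \in \calP} \prod_{e \in S \cap P} p(e) \;=\; \prod_{e \in S} p(e) \;=\; \Pr_{\text{EIGM}}\!\left[S \subseteq E(G)\right],
\end{equation*}
which, summed over all embeddings of any target subgraph $H$ as in Step 1, proves the claim.

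\textbf{Expected main obstacle.} The argument itself is short; the main subtlety is making precise what ``subgraph density'' means in a way that reduces cleanly to joint edge probabilities (induced vs.\ non-induced copies, labeled vs.\ unlabeled counting, normalization), so that one can invoke linearity of expectation on a single pointwise inequality. Once the counting is fixed so that the relevant statistic is a nonnegative linear combination of indicators of the form $\mathbf{1}[S \subseteq E(G)]$, the coupling above handles every term uniformly, so no additional difficulty arises.
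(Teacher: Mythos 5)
Your proposal is correct and is essentially the paper's own argument, fully fleshed out: the paper's proof sketch is precisely the observation that binding raises the joint existence probability of a set of co-bound edges from the product of their marginals to their minimum, and your block-wise factorization plus $\min \geq \prod$ for probabilities in $[0,1]$ makes that rigorous. Your closing caveat is also the right one — the reduction in Step~1 requires counting (non-induced) subgraph copies, i.e., nonnegative combinations of indicators $\mathbf{1}[S \subseteq E(G)]$, which is the intended reading of ``subgraph density'' here.
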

\begin{proof}[Proof sketch]
    When multiple edges are grouped together, the probability that all of them exist increases from the \textit{product} of their marginal probabilities to the \textit{minimum} of them. 
\end{proof}

With binding, we can construct EPGMs with different levels of edge dependency by different ways of binding the node pairs.
Let us first study two extreme cases.

\smallsection{Minimal binding.}
{EIGMs are the case with minimal binding, i.e., without binding, where the partition contains only sets of a single pair, i.e., $\calP = \set{\set{(u, v)} \colon u, v \in V}$.}

\smallsection{Maximal binding.}
Maximal binding corresponds to the case with $\calP = \set{\binom{V}{2}}$, i.e., all the pairs are bound together.
It achieves the upper bound of subgraph densities.

Below, we explore EPGMs between the two extremes.

\subsection{Local binding: Practical and tractable scheme}\label{sec:binding:local_binding}
{Building upon the general framework introduced in \cref{subsec:binding:binding}, we propose practical binding algorithms.}
Intuitively, the more pairs we bind together, the higher subgraph densities we have.
{Between minimal binding (i.e., EIGMs) and maximal binding that achieves the upper bound of subgraph densities, we can have a \textit{flexible} spectrum.}
However, the number of possible partitions of node pairs $\binom{V}{2}$ grows exponentially w.r.t. $\abs{V}$.
{Hence, we propose to introduce edge dependency \textit{without explicit partitions}.}
Specifically, we propose \textit{local binding},
where we repeatedly sample node groups,\footnote{We use independent \textit{node} sampling (yet still with \textit{edge} dependency), which is simple, tractable, and works empirically well in our experiments. See Appendix~C-C~\cite{appendix} for more discussions.\label{ftnt:indep_node_samp}} and
bind pairs between each sampled node group together.
Node pairs within the same node group are structurally related, and binding them together is expected to bring structurally meaningful edge dependency.

\smallsection{Real-world motivation.}
In social networks, each group ``bound together'' can represent a group interaction, e.g., an offline social event (meeting, conference, party) or an online social event (group chat, Internet forum, online game). 
In such social events, people gather together, and the communications/relations between them likely co-occur. 
At the same time, not all people in such events would necessarily communicate with each other, e.g., some people are more familiar with each other. This is the point of considering binding with various edge probabilities (instead of just inserting cliques).
{In general, group interactions widely exist in graphs in different domains, e.g., social networks~\citep{felmlee2013interaction}, biological networks~\citep{naoumkina2010genomic}, and web graphs~\citep{dourisboure2009extraction}.}
{{See Appendix~C-D~\cite{appendix} for more discussions.}}

In \cref{alg:local_binding}, we repeatedly sample a subset of nodes (Line~\ref{alg:local_binding:sample_nodes}) and group the ungrouped pairs between the sampled nodes (Line~\ref{alg:local_binding:sample_pairs}).
We maintain $P_{rem}$ to ensure disjoint partitions (Line~\ref{alg:local_binding:update_P_rem}).
For practical usage, we consider a limited number (i.e., $R$) of rounds for binding (Line~\ref{alg:local_binding:for_each_round}) otherwise it may take a long time to exhaust all the pairs.
Line~\ref{alg:local_binding} is also a probabilistic process, and we use $\flb$ to denote the corresponding RGM, i.e.,
$\flb(G) = \Pr[$\cref{alg:local_binding} outputs $G$ with inputs $p$, $g$, and $R$].
{As a special case of binding, local binding also correctly generates EPGMs.}

\begin{proposition}\label{thm:local_binding_EPPRGM}
    \cref{alg:local_binding} with input $p$ (and any $g$ and $R$) produces an EPGM w.r.t. $p$.
\end{proposition}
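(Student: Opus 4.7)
The plan is to reduce this proposition to \cref{thm:binding_preserve_probs} by viewing \cref{alg:local_binding} as a randomized construction of a valid partition $\calP$ of $\binom{V}{2}$, followed by general binding with that partition. Since \cref{thm:binding_preserve_probs} guarantees that general binding with any partition produces an EPGM w.r.t.\ $p$, the marginal edge probability is preserved in each realization, and the result follows by the law of total probability over the random choices made by the node-group sampling.

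First, I would formalize the partition extracted from a single execution of local binding. Initially, $P_{rem} = \binom{V}{2}$; at each of the $R$ rounds, a node subset is sampled according to $g$, and the still-ungrouped pairs among those nodes are removed from $P_{rem}$ and added as a new block. After $R$ rounds, any pairs still in $P_{rem}$ are treated as singleton blocks. Together these blocks form a disjoint partition $\calP$ of $\binom{V}{2}$, because $P_{rem}$ is maintained explicitly (Line~\ref{alg:local_binding:update_P_rem}) so that newly added blocks never overlap with previously created ones, and the singleton fallback covers every pair untouched by the $R$ rounds exactly once.

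Next, I would observe that, conditional on the realized partition $\calP$, \cref{alg:local_binding} coincides with \cref{alg:binding_general} applied to $p$ and $\calP$: each block uses its own independent threshold $s \sim \calU(0,1)$, and an edge $(u,v)$ is included iff $p(u,v) \geq s$ for the threshold of the block containing $(u,v)$. By \cref{thm:binding_preserve_probs}, this conditional distribution is an EPGM w.r.t.\ $p$, so $\Pr[(u,v) \in E(G) \mid \calP] = p(u,v)$ for every pair $(u,v)$. Taking expectation over the random partition $\calP$ then yields $\Pr_{\flb}[(u,v)] = \bbE_{\calP}\bigl[\Pr[(u,v) \in E(G) \mid \calP]\bigr] = p(u,v)$, which is exactly the EPGM condition in \cref{def:EPGMs}.

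The main subtlety, rather than a real obstacle, is verifying that the induced partition is well-defined no matter how the $R$ rounds unfold (so every pair lies in exactly one block, including pairs never touched by any sampled group) and that the threshold randomness used inside each block is drawn independently of the node-sampling randomness. Once these bookkeeping points are in place, \cref{thm:binding_preserve_probs} does essentially all the work, and no further computation is needed.
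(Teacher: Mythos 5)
Your argument is correct and matches the paper's reasoning: the paper also derives \cref{thm:local_binding_EPPRGM} as a direct corollary of \cref{thm:binding_preserve_probs}, since local binding simply constructs a (random) partition and then invokes general binding. Your additional care in conditioning on the realized partition and averaging over it is exactly the bookkeeping the paper leaves implicit.
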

\begin{proof}[Proof sketch]
    It is a direct corollary of \cref{thm:binding_preserve_probs}.
\end{proof}
\begin{remark}\label{rem:local_binding_spectrum}
    We introduce node-sampling probabilities (i.e., $g$) to sample node groups with better tractability, without explicit partitions.
    With higher node-sampling probabilities, larger node groups are bound together, and the generated graphs are expected to have higher subgraph densities.
    Local binding forms a spectrum between the two extreme cases (see \cref{subsec:binding:binding}):
    $g(v) \equiv 0$ gives minimal binding and $g(v) \equiv 1$ gives maximal binding.
\end{remark}

\normalem
\begin{algorithm}[t!]
    \small
    \caption{Local binding}\label{alg:local_binding}
    \DontPrintSemicolon
    \SetKwInOut{Input}{Input}
    \SetKwInOut{Output}{Output}
    \SetKw{Continue}{continue}
    \SetKwComment{Comment}{\blue{$\triangleright$ }}{}
    \Input{
        (1) edge probabilities $\pfull$; \\
        \hspace{0.6pt} (2) node-sampling probabilities $g\colon V \to [0, 1]$; \\
        \hspace{0.6pt} (3) maximum number of rounds $R$
    }
    \Output{$G$: generated graph}
    $\calP \gets \emptyset; i_{round} \gets 0; P_{rem} \gets \binom{V}{2}$ \hfill \Comment{\blue{Initialization}}
    \label{alg:local_binding:init}
    \For{$i_{round} = 1, 2, \ldots, R$} {\label{alg:local_binding:for_each_round}
    \textbf{if} $P_{rem} = \emptyset$ \textbf{then} \textbf{break} \hfill \Comment{\blue{Pairs exhausted}}
    \label{alg:local_binding:pairs_exhausted}

    sample $V_s \subseteq V$ with $\Pr[v \in V_s] = g(v)$ independently \\
    \label{alg:local_binding:sample_nodes}
    $P_s \gets \binom{V_s}{2} \cap P_{rem}$ 
    \hfill \Comment{\blue{Group pairs}}
    \label{alg:local_binding:sample_pairs}
    $\calP \gets \calP \cup \set{P_s}$;
    $P_{rem} \gets P_{rem} \setminus P_s$ 
    \hfill \Comment{\blue{Update}}
    \label{alg:local_binding:update_P_rem}
    }    
    $\calP \gets \calP \cup \set{\set{(u, v)} \colon (u, v) \in P_{rem}}$ 
    \hfill \Comment{\blue{Remaining pairs}}
    \label{alg:local_binding:include_P_rem} 
    \Return{the output of \cref{alg:binding_general} with inputs $p$ and $\calP$ \label{alg:local_binding:generate_G}}
    \label{alg:local_binding:return_G}
\end{algorithm}
\ULforem

\begin{theorem}[Time complexity of graph generation with local binding]\label{thm:local_binding_time_gen}
    Given $\pfull$, $\gfull$, and $R \in \bbN$,
    $\flb$ generates a graph in $O(R \left(\sum_{v \in V} g(v)\right)^2 + \abs{V}^2)$ time with high probability,\footnote{That is, $\lim_{\abs{V} \to \infty} \Pr[\text{it takes $O(R \left(\sum_{v \in V} g(v)\right)^2 + \abs{V}^2)$}] = 1$.} with the worst case $O(R \abs{V}^2)$.
\end{theorem}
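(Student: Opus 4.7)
The plan is to decompose the total running time of \cref{alg:local_binding} (which internally calls \cref{alg:binding_general}) into a fixed overhead and a per-round cost, and to bound the per-round cost via concentration.

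First, I would charge the fixed overhead. Initializing $P_{rem} = \binom{V}{2}$ on Line~\ref{alg:local_binding:init}, converting the leftover pairs into singleton blocks on Line~\ref{alg:local_binding:include_P_rem}, and finally invoking \cref{alg:binding_general} on a partition of $\binom{V}{2}$ all touch $O(\abs{V}^2)$ pairs. In particular, because the returned $\calP$ is a partition of $\binom{V}{2}$, \cref{alg:binding_general} performs, for each $P \in \calP$, one draw of $s$ and one comparison $p(u,v) \ge s$ per pair, for a total of $O\bigl(\sum_{P \in \calP} \abs{P}\bigr) = O(\abs{V}^2)$ work. This already accounts for the additive $\abs{V}^2$ term in the stated bound.

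Next, I would analyze a single round $r$ of the loop on Line~\ref{alg:local_binding:for_each_round}. Sampling $V_{s,r}$ with $\Pr[v \in V_{s,r}] = g(v)$ can be done in $O(\abs{V_{s,r}})$ time (e.g.\ via geometric-gap sampling), and computing $\binom{V_{s,r}}{2} \cap P_{rem}$ together with the update $P_{rem} \gets P_{rem} \setminus P_s$ costs $O(\abs{V_{s,r}}^2)$ provided $P_{rem}$ supports $O(1)$ membership queries and deletions; a hash set or a $\abs{V}\times\abs{V}$ bit matrix suffices, and the $O(\abs{V}^2)$ setup cost of the latter is already paid for in Step~1. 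Summed over $R$ rounds, the additional cost is $O\bigl(\sum_{r=1}^R \abs{V_{s,r}}^2\bigr)$, and since trivially $\abs{V_{s,r}} \le \abs{V}$ this yields the worst-case bound $O(R\abs{V}^2)$.

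The heart of the proof is the high-probability bound. Let $\mu = \sum_{v \in V} g(v)$, so $\abs{V_{s,r}}$ is a sum of independent Bernoullis with mean $\mu$. A multiplicative Chernoff bound gives $\Pr[\abs{V_{s,r}} > 2\mu + c\log\abs{V}] \le \abs{V}^{-\Omega(c)}$ for a suitable constant $c$; hence $\abs{V_{s,r}}^2 = O(\mu^2)$ with probability at least $1-\abs{V}^{-\omega(1)}$ whenever $\mu = \Omega(\log\abs{V})$, and a union bound over the $R$ rounds gives $\sum_r \abs{V_{s,r}}^2 = O(R\mu^2)$ w.h.p. In the complementary regime $\mu = O(\log\abs{V})$, each $\abs{V_{s,r}}^2 = O(\log^2\abs{V})$ w.h.p., and the contribution is absorbed into the $\abs{V}^2$ term under the mild assumption $R\log^2\abs{V} = O(\abs{V}^2)$. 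Combining the three steps yields the claimed $O\bigl(R\mu^2 + \abs{V}^2\bigr)$ bound w.h.p., along with the worst-case $O(R\abs{V}^2)$.

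The main obstacle I anticipate is the regime where $\mu$ is small: the $\log\abs{V}$ deviation term from Chernoff is not itself $O(\mu)$, so one cannot simply square and sum. The cleanest way around this is the case split above, but a more uniform alternative is to apply Bernstein's inequality directly to $\sum_r \abs{V_{s,r}}^2$, noting that $\mathbb{E}[\abs{V_{s,r}}^2] = \mu^2 + \mathrm{Var}(\abs{V_{s,r}}) \le \mu^2 + \mu$ and that $\abs{V_{s,r}}^2$ is bounded; this keeps the dominant term $R\mu^2$ intact and relegates all lower-order fluctuations to the $\abs{V}^2$ additive piece.
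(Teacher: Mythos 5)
Your proof follows essentially the same route as the paper's: charge $O(\abs{V}^2)$ for initialization, the leftover singleton blocks, and the final pass over the partition, bound each round by the square of the number of sampled nodes, and use concentration to replace that count by $\sum_{v\in V} g(v)$ with high probability, with $O(R\abs{V}^2)$ as the trivial worst case. Your treatment is in fact more careful than the paper's sketch on the one delicate point — the regime where $\sum_v g(v) = o(\log\abs{V})$, where the Chernoff deviation term is not $O(\mu)$ and an explicit case split (or Bernstein) is needed — so no gap to report.
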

\begin{proof}[Proof sketch]
    In each round, it takes $O(n^2)$, where $n$ is the number of sampled nodes, which is $\sum_{v \in V} g(v)$ in expectation, thus
    $O(\sum_{v \in V} g(v))$ with high probability, and $O(|V|^2)$ at most.
    Dealing with the remaining pairs takes $O(\binom{\abs{V}}{2})$.    
\end{proof}

We derive tractability results of local binding on the closed-form expected number of motifs (i.e., induced subgraphs; see \cref{sec:prelim_background}).
For this, we derive the probabilities of all the possible size-$3$ motifs for each node group,
then we compute the expected number of each motif by taking the summation over all different node groups, which is later used for parameter fitting (see \cref{sec:fitting}).
\begin{theorem}[Tractable 3-motif probabilities with local binding]\label{thm:local_binding_motif_probs}
    For any $\pfull$, $\gfull$, $R \in \bbN$, any size-$3$ node group $V' = \set{u, v, w} \in \binom{V}{3}$, and any size-$3$ motif instance $E^* \subseteq \binom{V'}{2}$,
    we can compute the closed-form $\Pr_{\flb}[E(G[V']) = E^*]$, as a function w.r.t. $p$, $g$, and $R$ (the detailed formulae are in Appendix~A-D~\cite{appendix}).
\end{theorem}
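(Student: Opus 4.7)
The plan is to compute $\Pr_{\flb}[E(G[V']) = E^*]$ by conditioning on the partition of the three pairs $e_1 = (u,v)$, $e_2 = (u,w)$, $e_3 = (v,w)$ into groups produced by \cref{alg:local_binding}. There are exactly five such partitions up to labeling of the pairs: (i) all three pairs in one group, (ii)--(iv) exactly two of them in one group while the remaining pair is alone (three subcases depending on which pair is the loner), and (v) three singleton groups. For each partition $\Pi$, I will derive closed forms for $\Pr[\Pi]$ and for the conditional probability $\Pr[E^* \mid \Pi]$, and then sum $\Pr[\Pi]\,\Pr[E^* \mid \Pi]$ over the five partitions.

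For $\Pr[\Pi]$, the key observation is that the restriction $A_i \coloneqq V_s \cap V'$ of each sampled node set to $V'$ is i.i.d.\ across the rounds $i = 1, \ldots, R$, with each of the eight possible values of $A_i$ having an explicit probability given by a product of $g(u), g(v), g(w)$ and their complements. Each round can be classified by what it does to the currently ungrouped pairs among $e_1, e_2, e_3$: rounds with $|A_i| \leq 1$ group nothing; rounds with $A_i \in \{\{u,v\}, \{u,w\}, \{v,w\}\}$ can group at most one specific pair; and rounds with $A_i = V'$ ``sweep'' all currently ungrouped pairs in $\binom{V'}{2}$ into a single group. Since the sweep is the only way any two of $e_1, e_2, e_3$ can become group-mates, partitions (ii)--(iv) force the singleton pair to be grouped strictly before the sweep that forms the paired group (otherwise the singleton would be absorbed). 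Each partition then corresponds to a structured family of round-type sequences over $R$ rounds, and its probability collapses to a geometric-type sum: partition (i) is a single geometric series in the waiting time to the first sweep, while each of (ii)--(iv) is a two-fold nested sum over the two relevant waiting times. Partition (v) can be obtained as $1$ minus the sum of the other four.

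For $\Pr[E^* \mid \Pi]$, the binding mechanism makes all pairs inside the same group share a common $s \sim \calU(0,1)$, so their joint existence pattern is nested according to marginal probabilities, while pairs in different groups are independent. A singleton group $\set{e}$ contributes $p(e)$ or $1 - p(e)$ depending on whether $e \in E^*$. A two-pair group with sorted marginals $p_a \leq p_b$ contributes one of $p_a$, $p_b - p_a$, $1 - p_b$, or $0$ for the outcomes ``both present'', ``only the larger-probability edge'', ``neither present'', and the infeasible ``only the smaller-probability edge'', respectively. A three-pair group with sorted marginals $p_a \leq p_b \leq p_c$ similarly contributes one of $p_a$, $p_b - p_a$, $p_c - p_b$, $1 - p_c$, or $0$ depending on whether $E^*$ matches one of the four feasible nested outcomes. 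Multiplying the group-wise contributions gives $\Pr[E^* \mid \Pi]$ in closed form.

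The main obstacle will be the partition-probability step: although each probability is a finite combination of geometric series in $R$, the bookkeeping is delicate because $g(u), g(v), g(w)$ are generally distinct, so the three ``partial sweep'' round types must be tracked separately; because partitions (ii)--(iv) require constraining the intermediate-round distribution to avoid early sweeps; and because the waiting times for the singleton and for the paired pairs must be summed jointly with the correct ordering. Once these partition probabilities are combined with the group-wise conditional probabilities above, the theorem follows by a finite summation, yielding a closed-form expression in $p$, $g$, and $R$ as claimed.
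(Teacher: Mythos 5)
Your proposal is correct and follows essentially the same route as the paper's proof: condition on the (five possible) induced partitions of the three node pairs into binding groups, compute each partition's probability as a finite geometric-type sum over the $R$ rounds using the i.i.d.\ restriction of the sampled node sets to $\set{u,v,w}$, and multiply by the nested threshold-coupling probabilities within each group. Your structural observations (that two pairs can only become group-mates via a round sampling all of $u,v,w$, and that the loner pair must therefore be grouped strictly earlier) are exactly the bookkeeping the paper's appendix carries out.
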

\begin{proof}[Proof sketch]
    For each size-$3$ node group, we (1) consider all the possible cases of how the pairs among the three nodes are partitioned and grouped, (2) compute the motif probabilities conditioned on each sub-case, and 
    (3) take the summation of motif probabilities in all the sub-cases.
    See Appendix~A-D~\cite{appendix} for the full proof and detailed formulae.
\end{proof}
    Closed-form motif probabilities show the tractability of EPGMs with binding, which allows us to estimate the output and fit the parameters of RGMs (see \cref{sec:fitting}).
    Regarding the parameters, higher $p$ and $g$ give higher clustering, while the choice of $R$ is mainly for controlling the running time.

\begin{remark}\label{rem:local_binding_motif_prob_higher_order}    
    \cref{thm:local_binding_motif_probs} can be extended to larger motif sizes, with
    practical difficulties from the increasing sub-cases as motif size increases.
\end{remark}

\begin{theorem}[Time complexity of computing motif probabilities with local binding]\label{thm:local_binding_time}
    Computing $\Pr_{\flb}[E(G[V']) = E^*]$ takes $O(\abs{V}^3)$ time in total for all $E^* \subseteq \binom{V'}{2}$ and $V' \in \binom{V}{3}$.
\end{theorem}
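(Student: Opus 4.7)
The plan is to bound the per-triple work by a constant and multiply by the number of triples. By \cref{thm:local_binding_motif_probs}, for each $V' = \set{u, v, w} \in \binom{V}{3}$ and each $E^* \subseteq \binom{V'}{2}$, the probability $\Pr_{\flb}[E(G[V']) = E^*]$ admits a closed-form expression in terms of the six scalars $p(u,v), p(u,w), p(v,w), g(u), g(v), g(w)$ and the round bound $R$. First, I would inspect the form of that expression (referred to Appendix~A-D) and observe that it is a finite sum over the sub-cases corresponding to the ways the three pairs in $\binom{V'}{2}$ can be partitioned and grouped during local binding. Since $\abs{\binom{V'}{2}} = 3$, the number of such sub-cases is bounded by a universal constant (e.g., the Bell number $B_3 = 5$ times a constant number of role assignments), independent of $V$, $p$, $g$, and $R$.

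Next, I would argue that each sub-case contributes a term built from a constant number of arithmetic operations on the six scalars and $R$ (e.g., products, sums, and at most a few exponentiations of the form $x^R$, which under the standard unit-cost arithmetic model used throughout the paper is treated as $O(1)$; if one prefers the bit model, this becomes $O(\log R)$ and can be absorbed since $R$ is treated as an input parameter rather than scaling with $\abs{V}$). Consequently, for a fixed $V'$ and a fixed $E^*$, evaluating the closed-form formula takes $O(1)$ time, assuming $O(1)$ access to the entries of $p$ and $g$.

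Finally, I would sum over outputs: there are $\binom{\abs{V}}{3} = O(\abs{V}^3)$ node triples and, for each, $2^{\binom{3}{2}} = 8$ possible values of $E^*$. Multiplying yields total cost
\[
O\!\left(\binom{\abs{V}}{3}\cdot 8 \cdot 1\right) = O(\abs{V}^3),
\]
which matches the claimed bound.

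The main obstacle I anticipate is not the counting but the $O(1)$ per-case claim: one must verify that the closed-form expressions derived in \cref{thm:local_binding_motif_probs} genuinely decompose into a bounded number of sub-cases and do not hide any summation whose size grows with $\abs{V}$ or $R$. That check reduces to re-reading the case analysis in Appendix~A-D and confirming that every sub-case probability is a constant-depth algebraic expression in the six local scalars and $R$; once that is established, the time bound follows immediately from the enumeration above.
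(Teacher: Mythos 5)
Your proposal is correct and matches the paper's own argument: enumerate the $O(\abs{V}^3)$ size-$3$ node groups, and note that for each group and each of the constantly many motif instances the closed-form formula from \cref{thm:local_binding_motif_probs} is a fixed arithmetic expression evaluable in $O(1)$ time. The extra care you take about the bounded number of sub-cases and the unit-cost treatment of $x^R$ is a reasonable elaboration but does not change the route.
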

\begin{proof}[Proof sketch]
    We enumerate all $O(|V|^3)$ size-$3$ node groups.
    For each group and each motif, the calculation takes $O(1)$ with fixed arithmetic formulae.
\end{proof}
\begin{remark}\label{rem:local_binding_time}
    The complexity can be reduced by considering node equivalence, which we will discuss in detail when discussing parameter fitting in \cref{sec:fitting}.
\end{remark}

\subsection{Parallel binding: Parallelizable icing on the cake}\label{sec:binding:para_binding}
In local binding, 
the sampling order matters, i.e., later rounds are affected by earlier rounds. 
If one pair is already determined in an early round, even if sampled again in later rounds, its (in)existence cannot be changed.
This property hinders the parallelization and the derivation of tractability analyses, and implies that each pair can only be bound together once, entailing less flexibility in group interactions.

We thus propose \textit{parallel binding}, a more flexible and naturally \textit{parallelizable} binding scheme.
The process is described in \cref{alg:iid_binding}.
We let $\fiid$ denote the corresponding RGM defined by $\fiid(G) = \Pr[\text{\cref{alg:iid_binding} outputs $G$ with inputs $p$, $g$, and $R$}]$.

The high-level idea is to make each round of binding probabilistically equivalent (see Lines \ref{alg:iid_binding:for_each_round} to \ref{alg:iid_binding:add_edges_each_round}).
In each round, we insert edges with some ``small'' probabilities $r$ calculated from $p$ (Line~\ref{alg:iid_binding:pair_prob_after_sampled}).
Those ``small'' probabilities accumulate over rounds, maintaining the final edge probabilities as $p$ (together with $p_{rem}$).
We can straightforwardly parallelize multiple rounds of binding by, e.g., multi-threading.

Although parallel binding is algorithmically different from (local) binding (e.g., no explicit partition is used), it shares many theoretical properties with local binding.
Specifically, Statements~\ref{thm:local_binding_EPPRGM} to \ref{thm:local_binding_time}
also apply to parallel binding.
This implies that we maintain (or even improve; see \cref{rem:para_binding_can_isolated_nodes}) correctness, tractability, flexibility, and efficiency when using parallel binding compared to local binding.
See Appendix~A-E~\cite{appendix} for the formal statements and proofs.

\begin{remark}\label{rem:para_binding_can_isolated_nodes}
We also derive tractability results of parallel binding on the expected number of (non-)isolated nodes.
It is much more challenging to derive such results for local binding where later rounds are affected by earlier rounds.
Since our main focus is on subgraph densities, 
see Appendix~B~\cite{appendix} for all the analysis regarding (non-)isolated nodes.
\end{remark}

\subsection{Efficient parameter fitting with node equivalence}\label{sec:fitting}

{With tractability results, we can fit parameters ($p$, $g$, and $R$) of binding (\cref{alg:local_binding,alg:iid_binding}) to the graph statistics of a given graph $G$, to generate random graphs that are similar to $G$, which is useful for the typical applications of RGMs, e.g., graph algorithm testing~\citep{mishra2011effective}, statistical testing~\citep{wandelt2019use}, and graph anonymization~\citep{casas2017survey}.}

{By default, we consider the cases where 
the number $R$ of rounds is fixed,
edge probabilities $p$ are obtained by applying an EIGM to the given graph $G$,
and we aim to fit the node-sampling probabilities $g$.
It is also possible to jointly optimize both $p$ and $g$ (see \cref{sec:exp:joint_optim}), or generate graphs ``from scratch'' with different levels of clustering by directly
setting the parameters (see Appendix~D-G~\cite{appendix}).}

Since our closed-form tractability results are functions involving only arithmetic operations on the parameters, differentiable optimization techniques (e.g., Newton-Raphson and gradient descent) are possible for parameter fitting.
The fitting objective can be any function of $3$-motif probabilities.
{For example, one can fit the (expected) number of triangles ($\bbE_{\fxx}[\triangle(G)]$) to the ground-truth number in an input graph ($\triangle(G_{input})$) by minimizing the ``difference'' between them, e.g., 
$(1 - \frac{\bbE_{\fxx}[\triangle(G)]}{\triangle(G_{input})})^2$,
where 
$\bbE_{\fxx}[\triangle(G)] = \sum_{V' \in \binom{V}{3}} \Pr_{\fxx}[E(G[V']) = \binom{V'}{2}]$ are obtainable by the tractability results (see \cref{thm:local_binding_motif_probs})
with $X \in \set{\text{LB}, \text{PB}}$ indicating the binding scheme (LB represents local binding and PB represents parallel binding).}

Efficient evaluation of the fitting objective is important.
{A key challenge is that the naive computation takes $O(\abs{V}^3)$ time in total by considering all $O(\abs{V}^3)$ different possible node groups $V'$ 
(see \cref{thm:local_binding_motif_probs}).}
We aim to improve the speed of computing the tractability results by considering \textit{node equivalence} and thus reducing the total number of distinct node groups we need to consider.

\smallsection{Erd\H{o}s-R\'{e}nyi (ER) model.}
The ER model~\citep{erdos1959randomgraph1} outputs edge probabilities with two parameters: $n_0$ and $p_0$,
and the output is $p^{ER}_{n_0, p_0}$ with $p^{ER}_{n_0, p_0}(u, v) = p_0, \forall u, v \in \binom{V}{2}$ with $V = [n_0]$.
Given a graph $G = (V = [n], E)$, ER outputs $n_0 = n$ and $p_0 = \frac{2\abs{E}}{n (n - 1)}$.
ER outputs uniform edge probabilities, and all nodes are equivalent. 
We use uniform node-sampling probabilities, i.e., $g(v) = g_0, \forall v \in V$ for a single parameter $g_0 \in [0, 1]$.

\begin{lemma}\label{lem:fitting_er_time}
    For ER, the time of computing 
    $3$-motif probabilities can be reduced 
    from $O(\abs{V}^3)$ to $O(1)$.
\end{lemma}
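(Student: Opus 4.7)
The plan is to exploit the full symmetry of the Erd\H{o}s--R\'{e}nyi setting. Under ER, the edge probabilities satisfy $p(u,v) \equiv p_0$ and the node-sampling probabilities satisfy $g(v) \equiv g_0$, so every node is exchangeable with every other node. I would first argue that, under this exchangeability, the joint distribution of $E(G[V'])$ depends on $V'$ only through $|V'| = 3$, not through the identities of its elements. Hence for any two triples $V', V'' \in \binom{V}{3}$ and any pair of edge sets $E^* \subseteq \binom{V'}{2}$, $E^{**} \subseteq \binom{V''}{2}$ with the same cardinality (equivalently, isomorphic as graphs on $3$ labeled vertices up to relabeling), the probabilities $\Pr_{\flb}[E(G[V']) = E^*]$ and $\Pr_{\flb}[E(G[V'']) = E^{**}]$ coincide.

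Next, I would enumerate the equivalence classes of size-$3$ motifs under this symmetry. Since a triangle-free labeled graph on three nodes is determined, up to isomorphism, by its edge count, there are exactly four classes: the empty graph, a single edge, a path on three nodes (wedge), and a triangle. By the symmetry argument above, each class has a single representative probability. Applying \cref{thm:local_binding_motif_probs} to one fixed representative per class yields that representative probability in $O(1)$ time, since the closed-form expression reduces (after substituting $p_0$ and $g_0$ throughout) to a finite arithmetic expression whose number of terms is independent of $|V|$. Summing over the four classes therefore gives all $3$-motif probabilities in $O(1)$.

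Finally, I would note that when this output is used inside a fitting objective such as $\bbE_{\flb}[\triangle(G)] = \sum_{V' \in \binom{V}{3}} \Pr_{\flb}[E(G[V']) = \binom{V'}{2}]$, the sum collapses to $\binom{|V|}{3}$ copies of a single probability, and any other $3$-motif count reduces similarly to a constant multiple of $\binom{|V|}{3}$. No per-triple enumeration is needed.

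The main obstacle is not computational but notational: verifying that the case analysis underlying \cref{thm:local_binding_motif_probs} (partitions of the three pairs into bound subsets, ordered over the $R$ rounds) really does respect node-exchangeability when $p$ and $g$ are both constant, so that the class-representative reduction is valid. This follows because \cref{alg:local_binding} treats nodes symmetrically whenever $g$ is constant (Line~\ref{alg:local_binding:sample_nodes}), and the per-sub-case formulae in Appendix~A-D depend on $p$ and $g$ only through the constants $p_0$ and $g_0$; hence the reduction is justified and the $O(1)$ bound follows.
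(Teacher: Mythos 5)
Your proposal is correct and matches the paper's argument: the reduction rests on exactly the node-equivalence observation the paper uses for ER (uniform $p_0$ and uniform $g_0$ make all size-$3$ node groups interchangeable, so one representative group per motif class suffices and the per-group formula of \cref{thm:local_binding_motif_probs} evaluates in $O(1)$). The closing remark that sums over $\binom{V}{3}$ collapse to $\binom{|V|}{3}$ times a single representative probability is likewise the intended use in the fitting objective.
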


\normalem
\begin{algorithm}[t!]
    \caption{Parallel binding}\label{alg:iid_binding}
    \DontPrintSemicolon
    \SetKwInOut{Input}{Input}
    \SetKwInOut{Output}{Output}
    \SetKw{Continue}{continue}
    \SetKwComment{Comment}{\blue{$\triangleright$ }}{}
    \SetKwFunction{proc}{binding}
    \SetKwProg{myproc}{Procedure}{}{}
    
    \Input{
        (1) edge probabilities $p \colon \binom{V}{2} \to [0, 1]$\\ 
        \hspace{0.6pt} (2) node-sampling probabilities $g: V \to [0, 1]$\\ 
        \hspace{0.6pt} (3) the number of rounds $R$
    }
    \Output{$G$: generated graph}
    
    $E \gets \emptyset$ \label{alg:iid_binding:E_init} 
    \hfill \Comment{\blue{Initialization}}
    $r(u, v) \gets \min(\frac{1 - \left(1 - p(u, v)\right)^{1/R}}{g(u)g(v)}, 1), \forall u, v \in V$ \label{alg:iid_binding:pair_prob_after_sampled} \\
    $p_{rem}(u, v) \gets \max(1 - \frac{1 - p(u, v)}{(1 - g(u) g(v))^R}, 0), \forall u, v \in V$ \label{alg:iid_binding:pair_rem_prob} \\
    \For{$i_{round} = 1, 2, \ldots, R$} {\label{alg:iid_binding:for_each_round} 
        sample $V_s \subseteq V$ with $\Pr[v \in V_s] = g(v)$ independently \\ \label{alg:iid_binding:sample_nodes} 
        $E \gets E \cup \text{\proc{$r$, $\binom{V_s}{2}$}}$ \label{alg:iid_binding:add_edges_each_round}  \hfill  \Comment{\blue{See Alg. \ref{alg:binding_general}}}
    }
    sample a random variable $s \sim \calU(0, 1)$ \label{alg:iid_binding:s_RV_rem} \\
    $E \gets E \cup \{(u, v) : p_{rem}(u, v) \geq s\}$ \label{alg:iid_binding:edge_add_rem} \\
    \Return{$G = (V, E)$} \label{alg:iid_binding:return_G}

\end{algorithm}
\ULforem

\smallsection{Chung-Lu (CL) model.}
The CL model~\citep{chung2002connected} outputs edge probabilities with a sequence of expected degrees $D = (d_1, d_2, \ldots, d_n)$, and the output is $p^{CL}_{D}$ with $p^{CL}_{D}(u, v) = \min(\frac{d_u d_v}{\sum_{i = 1}^n d_i}, 1), \forall u, v \in \binom{V}{2}$ with $V = [n]$.
Given a graph $G = (V = [n], E)$, CL outputs $d_i = d(i; G)$ for each node $i \in V$.
CL outputs edge probabilities to match node degrees $D = (d_1, d_2, \ldots, d_n)$,
and nodes with the same degree are equivalent.
We set node-sampling probabilities as a function of degree (i.e., nodes with the same degree share the same node-sampling probability)
with $k_{deg}$ parameters,
where 
$k_{deg} \coloneqq \abs{\set{d_1, d_2, \ldots, d_n}}$ is the number of distinct degrees.

\begin{lemma}\label{lem:fitting_cl_time}
    For CL, the time of computing 
    $3$-motif probabilities
    can be reduced 
    from $O(\abs{V}^3)$ to $O(k^3_{deg})$.
\end{lemma}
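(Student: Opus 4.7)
The plan is to exploit degree-based equivalence. In the Chung-Lu model, the edge probability $p^{CL}_D(u, v) = \min(d_u d_v / \sum_i d_i, 1)$ depends on $u$ and $v$ only through the pair of degrees $(d_u, d_v)$. By the construction in the paragraph preceding the lemma, the node-sampling probability $g(v)$ is also defined as a function of $d(v; G)$, so $g(u) = g(v)$ whenever $d_u = d_v$. Consequently, any two nodes of equal degree are fully interchangeable from the perspective of both $p$ and $g$.

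First I would verify that the closed-form $3$-motif probability $\Pr_{\flb}[E(G[V']) = E^*]$ from \cref{thm:local_binding_motif_probs} is an arithmetic function whose inputs are exactly the three edge probabilities $\{p(u,v), p(u,w), p(v,w)\}$ and the three node-sampling probabilities $\{g(u), g(v), g(w)\}$ (together with the global parameter $R$). Since this was shown in the earlier theorem, it follows immediately that for any two triples $V', V'' \in \binom{V}{3}$ whose multisets of degrees coincide, every motif probability $\Pr_{\flb}[E(G[V']) = E^*]$ equals the corresponding $\Pr_{\flb}[E(G[V'']) = \tilde{E}^*]$ under the induced bijection.

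Next I would bound the number of distinct equivalence classes. Each triple $V'$ is characterized, up to equivalence, by an unordered multiset of three degrees drawn from the $k_{deg}$ distinct degree values. The number of such multisets is $\binom{k_{deg} + 2}{3} = O(k_{deg}^3)$. For each equivalence class one evaluates the closed-form $O(1)$ arithmetic formula from \cref{thm:local_binding_motif_probs}, giving total time $O(k_{deg}^3)$ to compute the full table of $3$-motif probabilities indexed by equivalence classes.

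The only subtle point — and the main (minor) obstacle — is that downstream tasks such as evaluating $\bbE_{\flb}[\triangle(G)]$ require multiplying each class's probability by the number of triples in that class. This combinatorial count can be obtained from a degree histogram $(n_d)_{d}$ in $O(|V|)$ preprocessing, after which the number of triples with degree multiset $\{d_1, d_2, d_3\}$ is a simple closed-form expression in the $n_d$'s (depending on how many of $d_1, d_2, d_3$ coincide). Since this preprocessing is linear in $|V|$ and the subsequent sum over equivalence classes is $O(k_{deg}^3)$, the overall cost of computing all $3$-motif probabilities, and hence of evaluating any motif-based fitting objective, is reduced from $O(|V|^3)$ to $O(k_{deg}^3)$ as claimed.
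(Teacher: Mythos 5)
Your proposal is correct and follows essentially the same route as the paper: exploit that both $p^{CL}_{D}$ and the degree-parameterized $g$ depend on a node only through its degree, so $3$-motif probabilities are constant on equivalence classes indexed by unordered degree multisets, of which there are $\binom{k_{deg}+2}{3} = O(k^3_{deg})$. Your added remark about the $O(\abs{V})$ degree-histogram preprocessing for weighting each class in downstream objectives is a sensible clarification consistent with the paper's intent.
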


\smallsection{Stochastic block (SB) model.}
Given a graph $G = (V = [n], E)$ and a node partition $f_B \colon [n] \to [c]$ with $c \in \bbN$, let 
$V_i = \set{v \in V \colon f_B(v) = i}$ denote the set of nodes partitioned in the $i$-th group for $i \in [c]$.   
The fitting of the edge probabilities in the SB~\citep{holland1983stochastic} model gives
$p_B \colon [c] \times [c] \to [0, 1]$ with
$p_B(i, i) = \frac{\abs{E(G[V_i])}}{\binom{\abs{V_i}}{2}}$ and
$p_B(i, j) = \frac{\abs{E \cap \set{(v, v') \colon v \in V_i, v' \in V_j}}}{\abs{V_i} \abs{V_j}}$,
for $i \neq j \in [c]$.
SB outputs edge probabilities with nodes assigned to different ``blocks'', and nodes partitioned in the same block are equivalent.
We set the node-sampling probabilities as a function of the block index, i.e., nodes in the same block share the same node-sampling probabilities, with the number of parameters equal to the total number $c$ of blocks.
\begin{lemma}\label{lem:fitting_SB_time}
    For SB, the time of computing 
    $3$-motif probabilities
    can be reduced 
    from $O(\abs{V}^3)$ to $O(c^3)$.
\end{lemma}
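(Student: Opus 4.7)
The plan is to exploit node equivalence induced by the block structure. Under the SB model the fitted edge probability $p(u,v)$ depends only on the block labels $(f_B(u), f_B(v))$, and we have set the node-sampling probability $g$ so that $g(v)$ depends only on $f_B(v)$. Since the closed-form expression from \cref{thm:local_binding_motif_probs} for $\Pr_{\flb}[E(G[V'])=E^*]$ is a fixed arithmetic function of the $p$-values on $\binom{V'}{2}$ and the $g$-values on $V'$, two node triples with the same multiset of block labels yield identical motif probabilities. Hence it suffices to enumerate distinct block-label patterns of triples rather than all $O(|V|^3)$ actual triples.

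Concretely, I would first precompute the block sizes $|V_1|, \ldots, |V_c|$ from the partition $f_B$ as part of the standard SB preprocessing. Then I would enumerate the unordered multisets $\{i, j, k\} \subseteq [c]$ of block labels, of which there are $\binom{c+2}{3} = O(c^3)$. For each such multiset I would plug the corresponding $p_B$ and $g$ values into the formula of \cref{thm:local_binding_motif_probs} to obtain $\Pr_{\flb}[E(G[V'])=E^*]$ for each motif instance $E^*$ in $O(1)$ arithmetic, and then multiply by the number of unordered node triples realizing that block multiset: $\binom{|V_i|}{3}$ if $i=j=k$, $\binom{|V_i|}{2}\,|V_k|$ (or the symmetric variant) if exactly two of the labels coincide, and $|V_i|\,|V_j|\,|V_k|$ if all three differ. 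Summing these weighted probabilities over all multisets yields the aggregate $3$-motif statistics used by the fitting objective (e.g., the expected triangle count when $E^* = \binom{V'}{2}$).

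The main subtlety is that some size-$3$ motifs (most notably the wedge) are not invariant under all permutations of $\{u,v,w\}$, so for a block multiset with repeated entries one must consistently match node roles to block labels before reading off $p$ and $g$ in the formula. This only introduces a bounded factor of at most $3!$ orderings per equivalence class and does not affect the asymptotics. The remaining arithmetic is routine, and the total cost is dominated by the $O(c^3)$ enumeration over block-label multisets with $O(1)$ work per multiset, matching the claimed bound. The identical argument applies to $\fiid$ via the analog of \cref{thm:local_binding_motif_probs} for parallel binding, since it likewise depends on $p$ and $g$ only through block-level values.
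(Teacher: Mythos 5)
Your proof is correct and follows essentially the same route as the paper: both exploit the block-induced node equivalence (since $p$ and $g$ depend only on block labels) to collapse the $O(\abs{V}^3)$ node triples into $O(c^3)$ block-label multisets, each handled in $O(1)$ time with the appropriate combinatorial multiplicities. Your attention to the role-matching subtlety for non-symmetric motifs such as wedges is a valid and welcome detail that does not change the asymptotics.
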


\smallsection{Stochastic Kronecker (KR) model.}
With a seed matrix $\theta \in [0,1]^{2 \times 2}$ and $k_{KR} \in \bbN$, the KR model~\citep{leskovec2010kronecker} outputs edge probabilities as the
$k_{KR}$-th Kronecker power of $\theta$ (see Appendix~A-F4~\cite{appendix} for a formal definition).
In practice, the fitting of KR typically uses \texttt{KronFit}~\citep{leskovec2010kronecker}, as proposed by the original authors of KR.
In KR, each node $i \in [2^{k_{KR}}]$ is associated with a binary vector of length $k_{KR}$, and nodes with the same number of ones in their vectors are equivalent.
We set node-sampling probabilities as a function of the number of ones in the binary vectors, with $k_{KR} + 1$ parameters, where $k_{KR} = O(\log \abs{V})$.
\begin{lemma}\label{lem:fitting_KR_time}
    For KR, the time of computing 
    $3$-motif probabilities
    can be reduced 
    from $O(\abs{V}^3)$ to $O(k^7_{KR})$.
\end{lemma}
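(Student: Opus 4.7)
The plan is to exploit the factorization structure of KR. Each node $u \in [2^{k_{KR}}]$ is encoded by a binary vector $(u_1, \dots, u_{k_{KR}}) \in \set{0,1}^{k_{KR}}$, the edge probability factorizes as $p(u,v) = \prod_{l=1}^{k_{KR}} \theta_{u_l, v_l}$, and by the node-sampling setup preceding the lemma, $g(u)$ depends only on the Hamming weight $\abs{u} = \sum_l u_l$. The key observation is that for an ordered triple $(u,v,w)$, if we record the column-pattern vector $\mathbf{n} = (n_{abc})_{a,b,c \in \set{0,1}} \in \bbN^{8}$ with $n_{abc} = \abs{\set{l \in [k_{KR}] \colon (u_l, v_l, w_l) = (a,b,c)}}$, then each of $p(u,v), p(u,w), p(v,w), g(u), g(v), g(w)$ is a deterministic function of $\mathbf{n}$ alone: the three Hamming weights arise as marginals of $\mathbf{n}$, and each pairwise probability is a monomial in the seed entries whose exponents are marginals of $\mathbf{n}$. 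Consequently, by \cref{thm:local_binding_motif_probs}, for every size-$3$ motif $E^*$, the probability $\Pr_{\fxx}[E(G[\set{u,v,w}]) = E^*]$ is a fixed arithmetic function of $\mathbf{n}$ only.

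Next I would count and aggregate. The number of pattern vectors $\mathbf{n} \in \bbN^{8}$ with $\sum_{a,b,c} n_{abc} = k_{KR}$ is $\binom{k_{KR}+7}{7} = O(k_{KR}^{7})$, and for each $\mathbf{n}$ the number of ordered triples $(u,v,w)$ realizing it is the multinomial $\binom{k_{KR}}{n_{000}, \dots, n_{111}}$, computable in $O(1)$ after an $O(k_{KR})$ precomputation of factorials. The algorithm enumerates all $O(k_{KR}^{7})$ patterns; for each it evaluates the motif probability and the multinomial weight in $O(1)$ and aggregates them into $\sum_{V' \in \binom{V}{3}} \Pr_{\fxx}[E(G[V']) = E^*]$, dividing by the appropriate factor to pass from ordered to unordered triples and subtracting the degenerate patterns that encode coincidences (i.e., those with $n_{abc} > 0$ only on indices forcing $u=v$, $v=w$, or $u=w$; there are $O(k_{KR}^{6})$ of these). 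Since there are only $2^{3} = 8$ choices of $E^*$, the total time remains $O(k_{KR}^{7})$, matching the claimed bound.

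The main obstacle I foresee is not the arithmetic but the bookkeeping around labeled versus unlabeled motifs: several size-$3$ motifs (e.g., a two-edge path) are not invariant under all six permutations of $(u,v,w)$, so $\Pr_{\fxx}[E(G[\set{u,v,w}]) = E^*]$ depends on which labeling of the triple is used to read off $\mathbf{n}$, while the pattern vector $\mathbf{n}$ itself is already a labeled object that transforms nontrivially under permutations of $(u,v,w)$. One must therefore either iterate $E^*$ over edge-labeled size-$3$ graphs and combine by each motif's automorphism group, or fix a canonical ordering for each pattern class and sum contributions of its orbit. Either choice keeps the per-pattern cost at $O(1)$, so the final $O(k_{KR}^{7})$ bound is preserved.
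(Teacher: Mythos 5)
Your proof is correct and follows essentially the same route the paper takes: the $O(k_{KR}^{7})$ bound comes precisely from classifying node triples by their joint column pattern $\mathbf{n}\in\bbN^{8}$ (the weak form of node equivalence the paper alludes to in its remark after the lemma), observing that all six quantities $p(\cdot,\cdot)$ and $g(\cdot)$ entering \cref{thm:local_binding_motif_probs} are functions of $\mathbf{n}$ alone, and weighting each of the $\binom{k_{KR}+7}{7}=O(k_{KR}^{7})$ patterns by a multinomial coefficient. Your bookkeeping caveats (ordered vs.\ unordered triples, degenerate patterns, motif automorphisms) are the right ones and do not affect the bound.
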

\begin{remark}
    The equivalence in KR is weaker than that in the other three models.
This is why the reduced time complexity in \cref{lem:fitting_KR_time} is $O(k^7_{KR})$ instead of $O(k^3_{KR})$. See Appendix~A-F4~\cite{appendix} for more details.
\end{remark}

\smallsection{Note.}
See Appendix~A-F~\cite{appendix} for more details, e.g., formal definitions of the models and more details of node equivalence.

\section{Experiments}\label{sec:exp}

In this section, we empirically evaluate EPGMs with our binding schemes and show the superiority of realization schemes beyond edge independency.
Specifically, we show the following two points:
\begin{itemize}[noitemsep]
    \item \textbf{(P1)} When we use our tractability results to fit the parameters of EPGMs, we improve upon EIGMs and reproduce high triangle densities (high clustering), which is a common pattern in real-world graphs; this also validates the correctness of our tractability results and algorithms.
    \item \textbf{(P2)} We can reproduce other common patterns, e.g., power-law degrees and small diameters, especially when the corresponding EIGMs can do so; this shows that improving EIGMs w.r.t. clustering by binding does not harm the generation quality w.r.t. other common patterns.
\end{itemize}

\begin{table}[t!]
    \centering
    \caption{The basic statistics of the datasets, where $\triangle$ represents the number of triangles, GCC represents global clustering coefficient, and ALCC represents average local clustering coefficient (see \cref{sec:prelim_background}). The upscaled versions of \textit{Hams} are obtained by duplicating the original graph, which are used for scalability analysis (see \cref{sec:exp:speed}).}
    \scalebox{0.95}{
    \begin{tabular}{lrrrrr}
    \toprule
    dataset & $|V|$   & $|E|$   & $\triangle$ & GCC   & ALCC \\
    \midrule
    \hamster    & 2,000  & 16,097 & 157,953 & 0.229 & 0.540 \\
    \facebook    & 4,039  & 88,234 & 4,836,030 & 0.519 & 0.606 \\
    \polblog    & 1,222  & 16,717 & 303,129 & 0.226 & 0.320 \\
    \spam    & 4,767  & 37,375 & 387,051 & 0.145 & 0.286 \\
    \biopg    & 1,692  & 47,309 & 2,353,812 & 0.321 & 0.447 \\
    \bioht    & 2,077  & 63,023 & 4,192,980 & 0.377 & 0.350 \\
    \midrule
    \multicolumn{6}{l}{\textit{(Upscaled versions of \hamster used for scalability analysis)}} \\
    \multirow{2}{*}{Synthetic}  & 4,000 -   & 32,194 -  & 315,906 -  & \multirow{2}{*}{0.229} & \multirow{2}{*}{0.540} \\
    & 64,000,000 & 515,104,000 & 5,054,496,000 \\
    \bottomrule
    \end{tabular}%
    }
    \label{tab:datasets}
\end{table}

\begingroup
\setlength\tabcolsep{3pt}
\begin{table*}[t!]
    \centering    
    \caption{
The clustering metrics of generated graphs.
The number of triangles ($\triangle$) is normalized.
For each dataset and each model, the best result is in bold and the second best is underlined.
AR represents average ranking.
{
The statistics are averaged over 100 random trials.
See Table~VIII in Appendix~D-B~\cite{appendix} for the full results with standard deviations.}
\textbf{Our binding schemes (\LB and \PB) are consistently and clearly beneficial for improving clustering, and generating graphs with {close-to-ground-truth} clustering metrics.}}
\begin{adjustbox}{max width=\textwidth}
\begin{tabular}{c|c||ccc|ccc|ccc|ccc|ccc|ccc||ccc}
\hline
\multicolumn{2}{c||}{dataset} & \multicolumn{3}{c|}{\hamster} & \multicolumn{3}{c|}{\facebook} & \multicolumn{3}{c|}{\polblog} & \multicolumn{3}{c|}{\spam} & \multicolumn{3}{c|}{\biopg} & \multicolumn{3}{c||}{\bioht} & \multicolumn{3}{c}{AR over dataset} \bigstrut\\
\hline
\multicolumn{2}{c||}{metric} & $\triangle$     & \textsc{gcc}   & \small{\textsc{alcc}}  & $\triangle$     & \textsc{gcc}   & \small{\textsc{alcc}}  & $\triangle$     & \textsc{gcc}   & \small{\textsc{alcc}}  & $\triangle$     & \textsc{gcc}   & \small{\textsc{alcc}}  & $\triangle$     & \textsc{gcc}   & \small{\textsc{alcc}}  & $\triangle$     & \textsc{gcc}   & \small{\textsc{alcc}}  & $\triangle$     & \textsc{gcc}   & \small{\textsc{alcc}} \bigstrut\\
\hline
\hline
model & \GT    & 1.00 & 0.23 & 0.54 & 1.00 & 0.52 & 0.61 & 1.00 & 0.23 & 0.32 & 1.00 & 0.14 & 0.29 & 1.00 & 0.32 & 0.45 & 1.00 & 0.38 & 0.35 & N/A   & N/A   & N/A \bigstrut\\
\hline
\multirow{3}[2]{*}{ER} & \EI    & 0.01 & 0.01 & 0.01 & 0.01 & 0.01 & 0.01 & 0.03 & 0.02 & \underline{0.02} & 0.01 & \textbf{0.00} & \underline{0.00} & 0.04 & 0.03 & 0.03 & 0.03 & 0.03 & \underline{0.03} & 3.0   & 2.7   & 2.5 \bigstrut[t]\\
      & \LB    & \textbf{1.00} & \textbf{0.32} & \underline{0.24} & \underline{1.01} & \underline{0.45} & \underline{0.22} & \underline{0.95} & \textbf{0.34} & \textbf{0.25} & \underline{0.99} & \underline{0.34} & \textbf{0.23} & \textbf{1.02} & \textbf{0.40} & \textbf{0.26} & \underline{1.01} & \textbf{0.42} & \textbf{0.25} & \underline{1.7} & \textbf{1.3} & \textbf{1.3} \\
      & \PB    & \underline{0.99} & \underline{0.39} & \textbf{0.64} & \textbf{1.00} & \textbf{0.57} & \textbf{0.81} & \textbf{1.02} & \underline{0.41} & 0.66 & \textbf{0.99} & 0.40 & 0.66 & \underline{0.97} & \underline{0.51} & \underline{0.75} & \textbf{0.99} & \underline{0.56} & 0.79 & \textbf{1.3} & \underline{2.0} & \underline{2.2} \bigstrut[b]\\
\hline
\multirow{3}[2]{*}{CL} & \EI    & 0.30 & 0.07 & 0.06 & 0.12 & 0.06 & 0.06 & 0.79 & 0.18 & \underline{0.17} & 0.50 & 0.07 & 0.06 & 0.68 & 0.23 & 0.22 & 0.64 & 0.24 & \textbf{0.23} & 3.0   & 3.0   & 2.5 \bigstrut[t]\\
      & \LB    & \underline{0.99} & \underline{0.17} & \underline{0.26} & \underline{1.03} & \underline{0.26} & \underline{0.30} & \textbf{1.00} & \underline{0.21} & \textbf{0.34} & \underline{1.03} & \underline{0.12} & \textbf{0.26} & \underline{1.00} & \underline{0.29} & \textbf{0.43} & \textbf{1.04} & \textbf{0.32} & \underline{0.47} & \underline{1.7} & \underline{1.8} & \textbf{1.5} \\
      & \PB    & \textbf{1.00} & \textbf{0.18} & \textbf{0.47} & \textbf{1.01} & \textbf{0.34} & \textbf{0.63} & \underline{1.01} & \textbf{0.22} & 0.47 & \textbf{1.01} & \textbf{0.13} & \underline{0.44} & \textbf{1.00} & \textbf{0.31} & \underline{0.58} & \underline{1.14} & \underline{0.29} & 0.61 & \textbf{1.3} & \textbf{1.2} & \underline{2.0} \bigstrut[b]\\
\hline
\multirow{3}[2]{*}{SB} & \EI    & 0.26 & 0.08 & 0.04 & 0.15 & 0.14 & 0.08 & 0.48 & 0.14 & 0.16 & 0.53 & 0.09 & 0.04 & 0.66 & 0.26 & 0.20 & 0.64 & 0.27 & 0.13 & 3.0   & 3.0   & 3.0 \bigstrut[t]\\
      & \LB    & \underline{1.04} & \textbf{0.22} & \underline{0.24} & \underline{0.93} & \underline{0.43} & \underline{0.33} & \textbf{0.99} & \textbf{0.24} & \textbf{0.35} & \underline{0.98} & \textbf{0.15} & \textbf{0.22} & \textbf{0.99} & \textbf{0.32} & \textbf{0.41} & \underline{1.03} & \textbf{0.35} & \textbf{0.39} & \underline{1.7} & \textbf{1.2} & \textbf{1.3} \\
      & \PB    & \textbf{0.99} & \underline{0.24} & \textbf{0.52} & \textbf{1.03} & \textbf{0.53} & \textbf{0.56} & \underline{1.01} & \underline{0.18} & \underline{0.25} & \textbf{0.99} & \underline{0.16} & \underline{0.36} & \underline{1.05} & \underline{0.33} & \underline{0.36} & \textbf{0.97} & \underline{0.34} & \underline{0.44} & \textbf{1.3} & \underline{1.8} & \underline{1.7} \bigstrut[b]\\
\hline
\multirow{3}[2]{*}{KR} & \EI    & 0.18 & 0.04 & 0.06 & 0.05 & 0.04 & 0.04 & 0.10 & 0.04 & 0.07 & 0.06 & 0.01 & 0.03 & 0.13 & 0.07 & 0.12 & 0.03 & 0.03 & 0.05 & 3.0   & 3.0   & 3.0 \bigstrut[t]\\
      & \LB    & \underline{1.09} & \underline{0.15} & \underline{0.23} & \underline{0.93} & \underline{0.24} & \underline{0.27} & \underline{1.06} & \underline{0.14} & \textbf{0.23} & \underline{0.94} & \underline{0.12} & \underline{0.19} & \underline{0.99} & \underline{0.17} & \underline{0.31} & \underline{1.44} & \underline{0.18} & \textbf{0.28} & \underline{2.0} & \underline{2.0} & \underline{1.7} \\
      & \PB    & \textbf{1.00} & \textbf{0.17} & \textbf{0.39} & \textbf{0.97} & \textbf{0.35} & \textbf{0.60} & \textbf{0.94} & \textbf{0.22} & \underline{0.42} & \textbf{1.05} & \textbf{0.16} & \textbf{0.38} & \textbf{1.00} & \textbf{0.28} & \textbf{0.46} & \textbf{1.07} & \textbf{0.35} & \underline{0.58} & \textbf{1.0} & \textbf{1.0} & \textbf{1.3} \bigstrut[b]\\
\hline
\hline
\multirow{3}[2]{*}{\shortstack[c]{AR\\over\\models}} & \EI    & 3.0   & 3.0   & 3.0   & 3.0   & 3.0   & 3.0   & 3.0   & 3.0   & \underline{2.5} & 3.0   & 2.5   & 2.8   & 3.0   & 3.0   & 3.0   & 3.0   & 3.0   & \underline{2.3} & 3.0   & 2.9   & 2.8 \bigstrut[t]\\
      & \LB    & \underline{1.8} & \textbf{1.5} & \underline{2.0} & \underline{2.0} & \underline{2.0} & \underline{2.0} & \textbf{1.5} & \textbf{1.5} & \textbf{1.0} & \underline{2.0} & \textbf{1.8} & \textbf{1.3} & \textbf{1.5} & \textbf{1.5} & \textbf{1.3} & \underline{1.8} & \textbf{1.3} & \textbf{1.3} & \underline{1.8} & \underline{1.6} & \textbf{1.5} \\
      & \PB    & \textbf{1.3} & \textbf{1.5} & \textbf{1.0} & \textbf{1.0} & \textbf{1.0} & \textbf{1.0} & \textbf{1.5} & \textbf{1.5} & \underline{2.5} & \textbf{1.0} & \textbf{1.8} & \underline{2.0} & \textbf{1.5} & \textbf{1.5} & \underline{1.8} & \textbf{1.3} & \underline{1.8} & 2.5   & \textbf{1.3} & \textbf{1.5} & \underline{1.8} \bigstrut[b]\\
\hline
\end{tabular}%
\end{adjustbox}
\label{tab:main_cluster}
\vspace{-3mm}
\end{table*}

\endgroup

\subsection{Experimental settings}
\smallsection{Datasets.}
We use six real-world datasets:
(1) social networks 
\textit{hamsterster (\hamster)} and \textit{facebook (\facebook)},
(2) web graphs \textit{polblogs (\polblog)} and
\textit{spam (\spam)}, and
(3) biological graphs \textit{CE-PG (\biopg)} and
\textit{SC-HT (\bioht)}.
{See \cref{tab:datasets} for the basic statistics of the datasets.}
We also upscale \textit{Hams} by duplicating the original graph for scalability analysis (see \cref{sec:exp:speed}).

\smallsection{Models.}
We consider the four edge-probability models analyzed in \cref{sec:fitting}:
the Erd\H{o}s-R\'{e}nyi (ER) model,
the Chung-Lu (CL) model,
the stochastic block (SB) model, and
the stochastic Kronecker (KR) model.
{Given an input graph, we fit each model to the graph and obtain the output edge probabilities (see \cref{sec:fitting} and Appendix~A-F~\cite{appendix} for more details).}

\smallsection{Realization methods.}
We compare edge-probability graph models with three different realization methods:
EIGMs (\EI) assuming edge dependency, and 
EPGMs with the two proposed binding schemes: local binding (\LB) and parallel binding (\PB).

\begin{figure*}[t!]
    \centering  
    \includegraphics[width=0.5\linewidth]{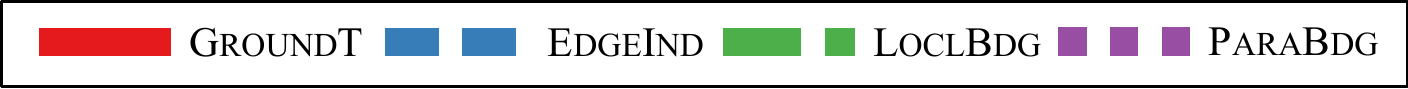} \\
    \includegraphics[width=\linewidth]{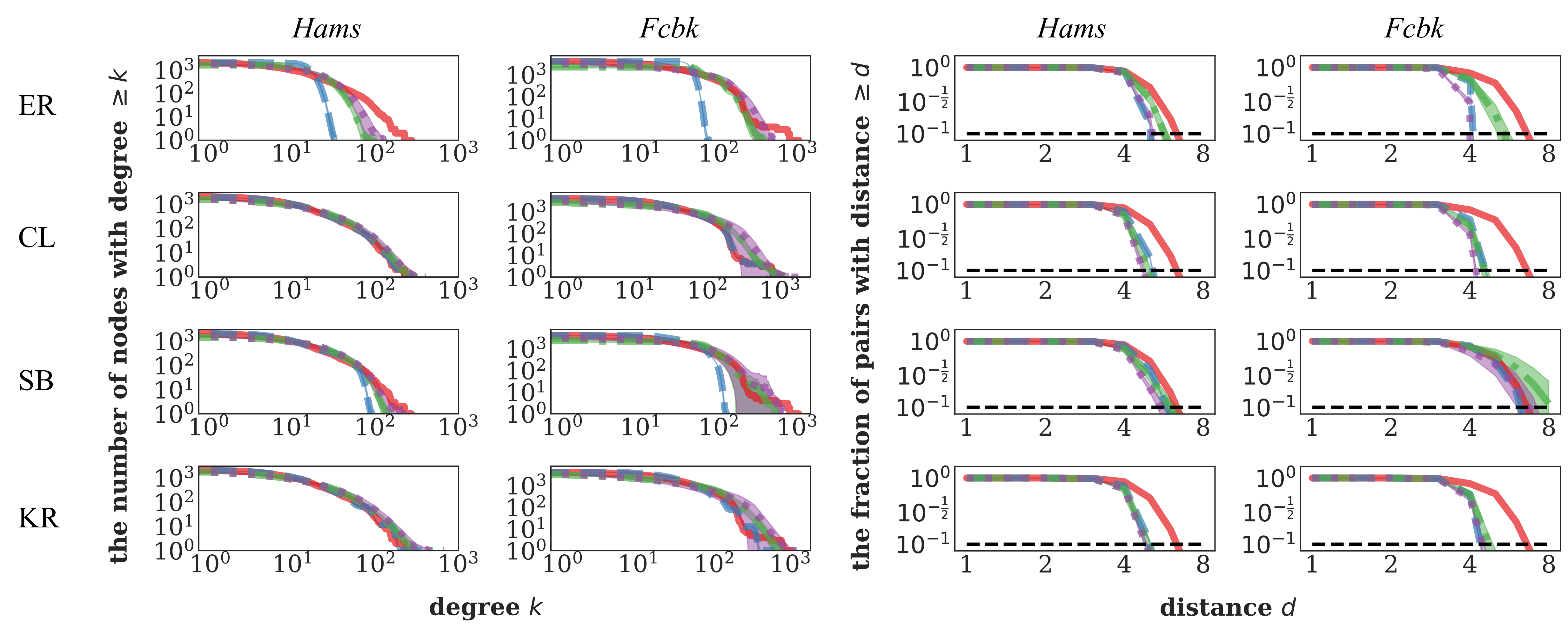}
    \vspace{-3mm}
    \caption{The degree (left) and distance (right) distributions
    of generated graphs.
    Each shaded area represents one standard deviation.
    The plots are on a log-log scale.
    \textbf{Our binding schemes (\LB and \PB) do not negatively affect degree or distance distributions, and provide improvements sometimes (e.g., for the ER model).}}
    \vspace{-5mm}
    \label{fig:main_fig}
\end{figure*}

\smallsection{Fitting.}
As in \cref{sec:fitting}, by default, we use the edge probabilities $p$ provided by the models, and we
only fit the node-sampling probabilities $g$.
Since our main focus is to improve clustering (see our motivation for binding at the start of \cref{sec:binding}), in our main experiments, we use the number of triangles, {an important indicator of clustering}~\citep{tsourakakis2009doulion,kolda2014counting}, as the objective of the fitting algorithms.
We use gradient descent to optimize parameters.
{In the main experiments, the edge probabilities are fixed as those output by the edge-probability models, while we also consider joint optimization of edge probabilities and node-sampling probabilities (see \cref{sec:exp:joint_optim}).}
Instead of fitting specific graphs, it is also possible to use EPGMs with binding to generate graphs ``from scratch'' with different levels of clustering by directly setting the parameters (see Appendix~D-G~\cite{appendix}).

\smallsection{Hardware and software.}
All the experiments of fitting are run on a machine with
two Intel Xeon\textsuperscript{\textregistered} Silver 4210R (10 cores, 20 threads) processors,
a 512GB RAM,
and RTX A6000 (48GB) GPUs.
A single GPU is used for each fitting process.
The code for fitting is written in Python, using Pytorch~\citep{paszke2019pytorch}.
All the experiments of graph generation are run on a machine with
one Intel i9-10900K (10 cores, 20 threads) processor,
a 64GB RAM.
The code for generation is written in C++, compiled with G++ with O2 optimization and OpenMP~\citep{dagum1998openmp} parallelization.
{See Appendix~D-A~\cite{appendix} for the detailed experimental settings.}

\subsection{P1: EPGMs reproduce high clustering (\cref{tab:main_cluster})}\label{sec:app:p1_clustering}
EPGMs with binding reproduce high clustering {in real-world graphs}.
In \cref{tab:main_cluster}, for each dataset and each model, we compare three clustering-related metrics, the number of triangles ($\triangle$), the global clustering coefficient (GCC), and the average local clustering coefficient (ALCC), in the ground-truth (\GT) graph and the graphs generated with each realization method.
For each dataset and each model, we compute the ranking of each method according to the absolute error w.r.t. each metric.
We also show the average rankings (ARs) over datasets and models.
The statistics are averaged over $100$ generated graphs.
{See Appendix~D-B~\cite{appendix} for the full results with standard deviations.}

EPGMs with both \LB and \PB almost perfectly reproduce the number of triangles in real-world graphs, while EIGMs (i.e., \EI in \cref{tab:main_cluster}) often fail to generate graphs with enough triangles.
GCC and ALCC are also significantly improved (upon EIGMs) in most cases, while
\PB has noticeably higher ALCC than \LB.
In some rare cases, \PB generates graphs with exceedingly high GCC and/or ALCC and haves higher absolute errors compared to EIGMs.

\begin{figure}[t!]
    \centering  
    \includegraphics[width=0.75\linewidth]{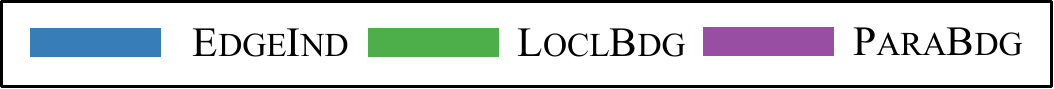} \\
    \includegraphics[width=0.9\linewidth]{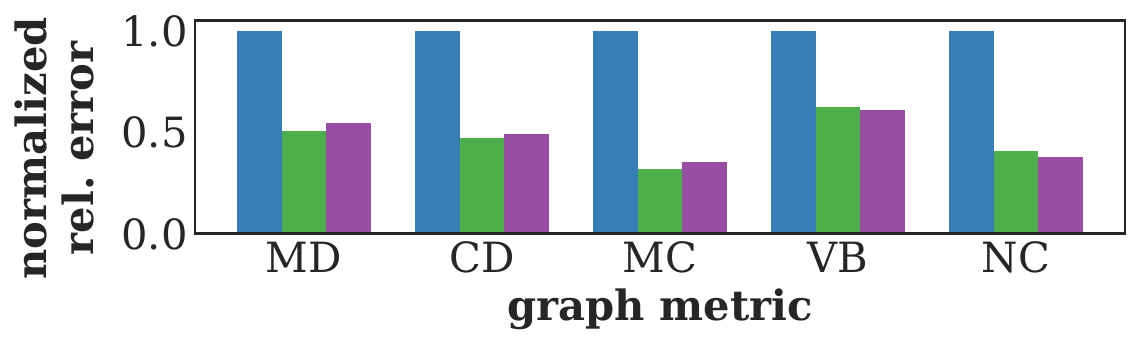}
    \caption{The normalized relative error w.r.t. different graph metrics of generated graphs. 
    For each graph metric: modularity (MD), conductance (CD), max core numbers (MC),  vertex betweenness (VB), and natural connectivity (NC), the relative errors are computed and averaged over all datasets and all models, and normalized so that the relative error of \EI is $1$. See Appendix~D-C~\cite{appendix} for the detailed results.  
    \textbf{Our binding schemes (\LB and \PB) generate graphs that are closer to the ground truth w.r.t. various graph metrics, consistently improving upon \EI.}}
    \label{fig:metrics_summary}
\end{figure}

\subsection{P2: EPGMs {reproduce other patterns} (Figures~\ref{fig:main_fig} and \ref{fig:metrics_summary})}\label{subsec:EPGMs_do_not_harm_others}
EPGMs with binding (\LB and \PB) also reproduce other {common patterns in real-world graphs}.
In \cref{fig:main_fig}, for each dataset (each column) and each model (each row), we compare the degree distributions and distance distributions in the ground-truth graph and the graphs generated with each realization method.
Specifically, for each realization method, we count the number of nodes with degree at least $k$ for each $k \in \bbN$ 
and count the number of pairs in the largest connected component with distance at least $d$ for each $d \in \bbN$ in each generated graph, and take the average number over $100$ generated graphs.
{See Appendix~D-C~\citep{appendix} for the full results.}

EPGMs with binding generate graphs with {common} patterns: power-law degrees and small diameters (i.e., small distances).
Both schemes (\LB and \PB) perform comparably well while \LB performs noticeably better with ER and \PB performs noticeably better with KR.
Importantly, when the edge probabilities output power-law expected degrees (e.g., CL and KR), the degree distributions are well preserved with binding.
Edge-independent ER cannot generate power-law degrees~\citep{bollobas2003mathematical}, and binding alleviates this problem.
Also, it is known that oscillations exist in the degree distributions of graphs generated by edge-independent KR~\citep{seshadhri2013depth,pinar2012similarity}, which is also alleviated by binding.

\smallsection{Other graph metrics.}
With binding, the generated graphs are also closer to the ground truth w.r.t. some other graph metrics: modularity, core numbers, conductance, vertex betweenness, and natural connectivity.
See Figure~\ref{fig:metrics_summary} for a summary of the results, and see Appendix~D-C~\cite{appendix} for the detailed results.

\subsection{Graph generation speed and scalability (Tables \ref{wrap-tab:time} and \ref{tab:upscale_main})}\label{sec:exp:speed}

\setlength{\intextsep}{0pt}%
\setlength{\columnsep}{10pt}%
\begin{wraptable}{r}{4cm}
\caption{The time (in seconds) for graph generation with different realization methods.}\label{wrap-tab:time}
\begin{adjustbox}{max width=\linewidth}
\begin{tabular}{lrr}
\toprule
      & \multicolumn{1}{l}{\hamster} & \multicolumn{1}{l}{\facebook} \\
\midrule
\EI   & 0.1   & 0.1 \\
\LB   & 4.7   & 49.0 \\
\PB   & 0.3   & 1.7 \\
\PB-\textsc{s} & 3.2   & 12.6 \\
\bottomrule
\end{tabular}%
\end{adjustbox}
\end{wraptable}
In Table~\ref{wrap-tab:time}, we compare the running time of graph generation (averaged over $100$ random trials) using \EI,
\LB, \PB, and serialized \PB without parallelization (\PB-s) with the stochastic Kronecker (KR) model.
For \EI, we use \texttt{krongen}~\citep{leskovec2016snap}, which is an algorithm parallelized and optimized for KR.

Among the competitors, \EI is the fastest with the simplest algorithmic nature.
Between the two binding schemes,
\PB is noticeably faster than \LB, and is even faster with parallelization.
Fitting the same number of triangles, \PB usually requires lower node-sampling probabilities and thus deals with fewer pairs in each round, and is thus faster even when serialized.

We also conduct scalability experiments by upscaling the \textit{Hams} dataset (see \cref{tab:datasets}).
As shown in \cref{tab:upscale_main}, with 32GB RAM, all the proposed algorithms can run with 128,000 nodes.
Notably, for \PB, an optimized implementation scales to up to 64 million nodes (over 512 million edges), by storing compact summaries (e.g., cliques and bi-cliques) instead of individual edges.
{See Appendix~D-D~\cite{appendix} for more details.}

\begin{table}[t!]
    \centering    
    \caption{The time (in seconds) for graph generation when upscaling the \textit{Hams} dataset by duplicating the graph.
    \textbf{With 32GB
RAM, all the proposed algorithms can run with 128,000 nodes,
and the scalable implementation of \PB can run with $64$ million nodes (over $512$ million edges).
}
}
\begin{adjustbox}{max width=\linewidth}
\begin{tabular}{clrrrrrrr}
\toprule
model  & $|V|$   & 2k    & 4k    & 8k    & 16k   & 32k   & 64k   & 128k \\
\midrule
\multirow{2}[0]{*}{ER} & \LB & 3.2   & 6.5   & 16.4  & 45.6  & 143.4 & 494.5 & 1859.2 \\
      & \PB  & 0.0   & 0.1   & 0.1   & 0.2   & 0.6   & 1.7   & 5.4   \\
\midrule
\multirow{2}[0]{*}{CL} & \LB & 4.0   & 9.6   & 35.4  & 123.9 & 472.3 & 2162.3& 8402.2 \\
      & \PB  & 0.3   & 0.5   & 1.0   & 2.1   & 4.4   & 11.2  & 31.1  \\
\midrule
\multirow{2}[0]{*}{SB} & \LB & 4.0   & 9.5   & 29.6  & 99.2  & 362.9 & 1648.4& 8398.1 \\
      & \PB  & 0.3   & 0.5   & 1.0   & 2.1   & 5.3   & 14.9  & 46.0  \\
\midrule
\multirow{2}[0]{*}{KR} & \LB & 8.6   & 31.2  & 124.5 & 506.9 & 2097.2& 8681.0& 33918.4\\
      & \PB  & 0.4   & 1.2   & 4.3   & 20.3  & 113.5 & 705.6 & 4351.6\\
\midrule\midrule
model  & $|V|$   & 1m    & 2m    & 4m    & 8m   & 16m   & 32m   & 64m \\
\midrule
ER & \multirowcell{6}{\PB \\ (scalable \\ version)} & 5.9 & 12.4 & 28.2 & 61.0 & 121.9 & 262.7 & 491.0 \\
\cmidrule(lr){1-1} \cmidrule(lr){3-9}
CL & & 102.2 & 220.2 & 423.8 & 815.9 & 1685.6 & 3135.2 & 6179.3 \\
\cmidrule(lr){1-1} \cmidrule(lr){3-9}
SB & & 106.0 & 213.7 & 429.0 & 869.0 & 1798.3 & 3829.6 & 8639.0 \\
\cmidrule(lr){1-1} \cmidrule(lr){3-9}
KR & & 105.1 & 219.4 & 439.1 & 875.4 & 1751.3 & 3504.7 & 7014.9 \\
\bottomrule
\end{tabular}%
\end{adjustbox}
\label{tab:upscale_main}
\end{table}

\subsection{Joint optimization (Table~\ref{tab:joint_optim})}\label{sec:exp:joint_optim}

In addition to optimizing node-sampling probabilities for given edge probabilities, we can also jointly optimize both kinds of probabilities.
In Table~\ref{tab:joint_optim},
we compare the ground-truth clustering and
that generated by EPGMs using three variants of parallel binding:
(1) \PB with the number of triangles as the objective 
(the one used in \cref{tab:main_cluster}),
(2) \PB-\textsc{w} with the numbers of triangles and wedges as the objective (given edge probabilities),
(3) \PB-\textsc{jw} jointly optimizing both kinds of probabilities, with the numbers of triangles and wedges as the objective.

\begingroup
\setlength\tabcolsep{7pt}
\begin{table}[t!]
\vspace{3mm}
\caption{
    The clustering metrics of the graphs generated by three variants of parallel binding.
    The number of triangles ($\triangle$) is normalized.
    For each dataset and each model, the best result is in bold, and the second best is underlined.
    \textbf{Using joint optimization further enhances the power of our binding scheme \PB to fit graph statistics and reproduce graph patterns.}
    }\label{tab:joint_optim}    
    \centering    
    \begin{adjustbox}{max width=\linewidth}
    \begin{tabular}{l||ccc||ccc}
    \hline
    dataset & \multicolumn{3}{c||}{\hamster}       & \multicolumn{3}{c}{\facebook} \bigstrut\\
    \hline
    \hline
    metric      & $\triangle$     & GCC   & ALCC       & $\triangle$     & GCC   & ALCC \bigstrut\\
    \hline
    \hline
    \GT    & 1.000 & 0.229 & 0.540 & 1.000 & 0.519 & 0.606 \bigstrut\\
    \hline
    \PB & 
    \underline{0.997} & 0.165 & \underline{0.394} & 
    0.971 & 0.347 & \textbf{0.605} \bigstrut[t]\\
    \PB-\textsc{w} 
    & 0.964 & \underline{0.176} & 0.260 & \underline{1.021} & \underline{0.408} & 0.458 \\
    \PB-\textsc{jw} & 
    \textbf{0.999} & \textbf{0.230} & \textbf{0.448} & 
    \textbf{1.018} & \textbf{0.521} & \underline{0.644} \bigstrut[b]\\
    \hline
    \end{tabular}%
    \end{adjustbox}
\end{table}

\endgroup

On both \hamster and \facebook, \PB and \PB-\textsc{w} can well fit the number of triangles but have noticeable errors w.r.t. the number of wedges (and thus GCC), while \PB-\textsc{jw} with joint optimization accurately fits both triangles and wedges.
On the other datasets, the three variants perform similarly well because \PB already preserves both triangles and wedges well, and there is not much room for improvement.
Notably, with joint optimization, the degree and distance distributions are still well preserved (see Appendix~D-E~\cite{appendix}).

\begin{figure}[t!]
    \centering
    \includegraphics[width=\linewidth]{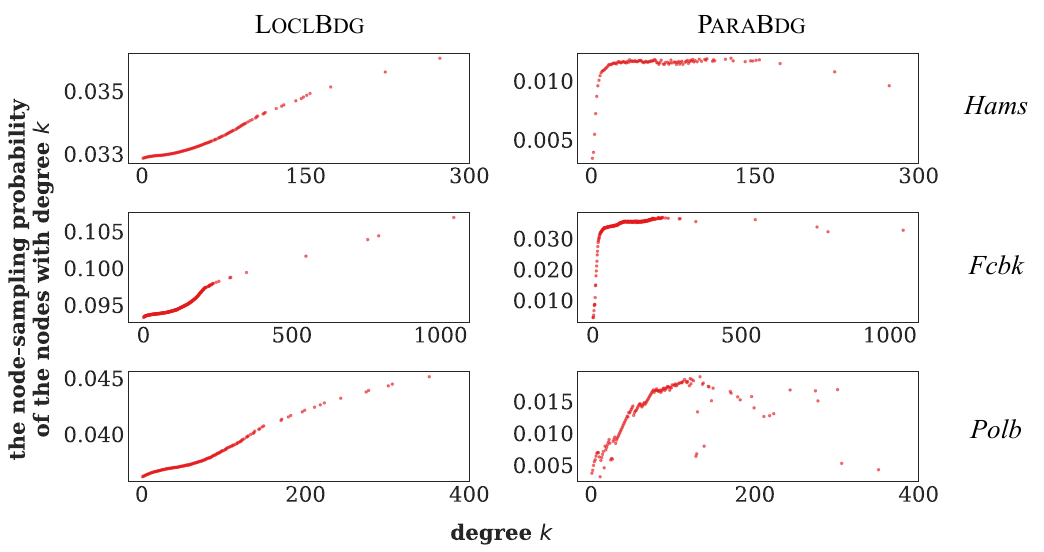}
    \caption{The relations between node degrees and node-sampling probabilities. See Appendix~D-A~\citep{appendix} for the full results. }\label{fig:corr_deg_alpha_main}
\end{figure}

\subsection{Comparison with advanced RGMs (\cref{tab:bter_main_paper})}\label{sec:compare_other_models}

We test 
other edge-dependent RGMs: preferential attachment (PA) model~\citep{barabasi1999emergence}) and
random geometric graphs (RGG) model~\citep{penrose2003random}), 
and advanced EIGMs: block two-level Erd\H{o}s-R\'{e}nyi (BTER) model~\citep{kolda2014scalable} and Lancichinetti-Fortunato-Radicchi (LFR) model~\citep{lancichinetti2008benchmark}.
In \cref{tab:bter_main_paper}, we report clustering-related metrics and overlap (low overlap indicates high output variability; see \cref{sec:rel_wk:limit_eigms}) of the graphs generated by each model for \hamster and \facebook datasets.
See Appendix~D-F~\cite{appendix} for full results on other datasets.

We fit PA and RGG to the numbers of nodes and edges of each input graph.
As discussed in \cref{sec:rel_wk:dependency}, closed-form tractability results on subgraph densities are not unavailable for PA and RGG.
PA fails to generate graphs with high clustering.
For RGG, sometimes enough triangles are generated with small $d$ values, but the GCC and ALCC are too high in such cases.
We also tried exchangeable network models and exponential random graphs, which failed to generate comparable results.
See Appendix~D-B~\cite{appendix} for more details.

BTER essentially uses a mixture of multiple Chung-Lu models to generate high clustering, and LFR imposes community structures on top of Chung-Lu.
We empirically validate that, compared to EIGMs, EPGMs with binding (we report the results based on Chung-Lu; one may achieve even better performance with binding based on other edge-probability models, as shown in \cref{tab:main_cluster}) achieve comparable or better performance in generating high-clustering graphs, with much higher output variability (i.e., low overlap), which is important and desirable for RGMs (see \cref{sec:rel_wk}).

{In Appendix~D-F~\cite{appendix}, we also evaluate three deep graph generative models: CELL~\citep{rendsburg2020netgan},
GraphVAE~\citep{simonovsky2018graphvae}, and
GraphRNN~\citep{you2018graphrnn}.
All of them failed to generate comparable results: CELL and GraphVAE produce highly overlapping graphs (i.e., low output variability), while GraphRNN generates graphs with many more edges but much lower clustering than the ground truth.}

\subsection{Extra experimental results}\label{sec:extra_exp}

See Figure~\ref{fig:corr_deg_alpha_main} for analyses on node-sampling probabilities assigned by our fitting algorithms. We observe that our fitting algorithms assign different node-sampling probabilities to different nodes, implying that different nodes have different levels of importance in binding. Full results are in Appendix~D~\cite{appendix}.
See Appendix~B~\cite{appendix} for the theoretical analyses and experimental results on fitting the number of (non-)isolated nodes, which further shows the tractability of our binding schemes.

\begin{table}[t!]
\setlength\tabcolsep{3pt}
\caption{
    The clustering metrics and overlap (lower the better) of the graphs generated by binding and other models.
    For each dataset and each model, the best result is in bold, and the second best is underlined.
    \textbf{Overall, binding achieves promising performance in generating high-clustering graphs, with high variability.}
    }\label{tab:bter_main_paper}    
    \centering    
    \begin{adjustbox}{max width=\linewidth}
    \begin{tabular}{l||cccc||cccc}
    \hline
    dataset & \multicolumn{4}{c||}{\hamster}       & \multicolumn{4}{c}{\facebook} \bigstrut\\
    \hline
    \hline
    metric      
    & $\triangle$     & GCC   & ALCC  & overlap      
    & $\triangle$     & GCC   & ALCC  & overlap \bigstrut\\
    \hline
    \hline
    \GT    
    & 1.000 & 0.229 & 0.540 & N/A
    & 1.000 & 0.519 & 0.606 & N/A
    \bigstrut\\
    \hline    
    \LB-CL
    & \underline{0.992} & 0.165 & 0.255 & {5.8\%}
    & \underline{1.026} & 0.255 & 0.305 & {6.3\%}
    \bigstrut[t] \\
    \PB-CL
    & \textbf{1.000} & \textbf{0.185} & 0.471 & 5.9\%
    & \textbf{1.006} & 0.336 & 0.626 & {6.2\%}
    \\
    PA
    & 0.198 & 0.049 & 0.049 & {4.7}\%
    & 0.120 & 0.061 & 0.061 & 6.2\%
    \\
    RGG ($d = 1$)
    & 1.252 & 0.751 & 0.751  & \textbf{0.8}\%
    & 0.607 & 0.751 & 0.752  & \textbf{1.1}\%
    \\
    RGG ($d = 2$)
    & 1.011 & 0.595 & 0.604  & \underline{0.8}\%
    & 0.492 & 0.596 & \underline{0.607}  & \underline{1.1}\%
    \\
    RGG ($d = 3$)
    & 0.856 & 0.491 & \underline{0.513}  & 0.8\%
    & 0.421 & \underline{0.494} & 0.518  & 1.1\%
    \\
    BTER
    & 0.991 & \underline{0.290} & \textbf{0.558} & 53.8\%
    & 0.880 & \textbf{0.525} & \textbf{0.605} & 68.0\%
    \\
    LFR ($\mu = 0.0$)
    & 1.140 & 0.262 & 0.546 & 43.5\%
    & N/A & N/A & N/A & N/A
    \\
    LFR ($\mu = 0.5$)
    & 0.296 & 0.068 & 0.081 & 13.4\%
    & 0.161 & 0.084 & 0.120 & 17.0\%
    \\    
    LFR ($\mu = 1.0$)
    & 0.197 & 0.045 & 0.047 & 7.0\%
    & 0.105 & 0.055 & 0.059 & 6.7\% 
    \bigstrut[b] \\
    \hline
    \end{tabular}%
    \end{adjustbox}
\end{table}

\section{Conclusion and discussions}\label{sec:conc_and_limit}

We show that realization beyond edge independence can 
{better reproduce common patterns}
while ensuring high tractability and variability.
We formally define EPGMs and show their basic properties (\cref{sec:prob_statement}).
We propose a {pattern-reproducing}, tractable, and flexible realization framework called \textit{binding} (\cref{alg:binding_general}) with two practical variants: local binding (\cref{alg:local_binding}) and parallel binding (\cref{alg:iid_binding}).
We derive tractability results (\cref{thm:local_binding_motif_probs}) on the closed-form subgraph densities, and propose efficient parameter fitting (\cref{sec:fitting}; Lemmas~\ref{lem:fitting_er_time}-\ref{lem:fitting_KR_time}).
We conduct extensive experiments to show the empirical power of EPGMs with binding (\cref{sec:exp}). 

\smallsection{Limitations and future directions.}
EPGMs with binding generate more isolated nodes than EIGMs due to higher variance.
Fortunately, we can address the limitation by 
fitting and controlling the number of isolated nodes with the tractability results, as mentioned in \cref{rem:para_binding_can_isolated_nodes}.

{The performance of EPGMs depends on both the underlying edge probabilities and the way to realize (i.e., sample from) them. In this work, we focus on the latter, while finding valuable edge probabilities is an independent problem. Notably, as shown in \cref{sec:exp:joint_optim}, it is possible to jointly optimize both edge probabilities and their realization.}

Binding only covers a subset of EPGMs, and we will explore the other types of EPGMs (e.g., EPGMs with lower subgraph densities).
Combining binding with other mechanisms in existing edge-dependent RGMs to create even stronger RGMs, and developing practical algorithms for higher-order motifs, are some interesting future directions.

\section*{Acknowledgments}
{\small This work was partly supported by the National Research Foundation of Korea (NRF) grant funded by the Korea government (MSIT) (No. RS-2024-00406985).
This work was partly supported by Institute of Information \& Communications Technology Planning \& Evaluation (IITP) grant funded by the Korea government (MSIT) (No. RS-2024-00438638, EntireDB2AI: Foundations and Software for Comprehensive Deep Representation Learning and Prediction on Entire Relational Databases) (No. RS-2019-II190075, Artificial Intelligence Graduate School Program (KAIST)). 
F.B. gives special thanks to Prof. Jaehoon Kim (KAIST), from whom F.B. studied the probabilistic method.
R.Y. and P.B acknowledge the support by the U.S. ArmyResearch Office (ARO) under Grant No. W911NF-23-1-0111, National Science Foundation (NSF) under the Career Award CPS-1453860, CNS-1932620 and award No. 2243104, Center for Complex Particle Systems (COMPASS), DARPA Young Faculty Award and DARPA Director Award under Grant Number N66001-17-1-4044. P.B. is also grateful to National Institute of Health (NIH) for the grants R01 AG 079957 ``Interpretable machine learning to synergize brain age estimation and neuroimaging genetics'' and RF1 AG 082201 ``Neurovascular calcification and ADRD in two nonindustrial Native American populations''.  The views, opinions, and/or findings in this article are those of the authors and should not be interpreted as official views or policies of Department of Defense, National Science Foundation, or National Institutes of Health.}

\normalem

\bibliographystyle{IEEEtran}
\bibliography{ref}

\end{document}